\documentclass[letterpaper,journal]{IEEEtran}
\usepackage{amsmath,amsfonts,amssymb}
\usepackage{algorithmic}
\usepackage{algorithm}
\usepackage{array}
\usepackage[caption=false,font=normalsize,labelfont=sf,textfont=sf]{subfig}
\usepackage{enumerate}
\usepackage{textcomp}
\usepackage{stfloats}
\usepackage{url}
\usepackage{verbatim}
\usepackage{graphicx}
\graphicspath{{figures/}}
\usepackage{cite}
\usepackage{booktabs}

\newtheorem{proposition}{Proposition}

\begin{document}

\title{Koopman Dreamer: Spectrally Constrained Latent Dynamics for Stable World-Model Imagination}

\author{Jiaqi Li, Xinglong Zhang, Haibin Xie, Wei Jiang, Yixing Lan, Wei Pan, and Xin Xu\thanks{Jiaqi Li, Xinglong Zhang, Haibin Xie, Wei Jiang, Yixing Lan, and Xin Xu are with the College of Intelligence Science and Technology, National University of Defense Technology, Changsha, China. Wei Pan is with the School of Engineering, Newcastle University, Newcastle-upon-Tyne, UK. Corresponding authors: Xinglong Zhang.} }

\maketitle

\begin{abstract}
Latent world models improve sample efficiency in continuous control by optimizing policies over imagined latent trajectories, but common neural transitions offer limited direct control over modal persistence and error accumulation in long rollouts. We propose Koopman Dreamer, a Dreamer-style world model with a spectrally constrained deterministic latent dynamics core. Its Koopman-inspired backbone uses two-dimensional rotation--scaling blocks with bounded radii to represent damping, rotation, and near-periodic modes. Linear and low-rank bilinear action terms capture global and state-dependent control effects, while stochastic-state modulation supplies local correction information. To reduce the mismatch between posterior-conditioned training and prior-only imagination, the model combines posterior-conditioned EMA teacher targets with one-step consistency, multi-step rollout, and open-loop observation-prediction objectives. We further derive a multi-step rollout-error bound that separates amplification by the spectral backbone and bilinear interaction from the additive effects of stochastic-state mismatch and modeling residuals, clarifying the trade-off between error attenuation and long-term information retention. Experimental results on proprioceptive continuous-control tasks from the DeepMind Control Suite and UAV-LiDAR autonomous navigation demonstrate that Koopman Dreamer improves the stability of long-horizon latent rollouts and achieves stronger closed-loop control performance on tasks that rely on high-quality multi-step imagination.

\end{abstract}

\begingroup
\renewcommand{\abstractname}{Note to Practitioners}
\begin{abstract}
World-model-based controllers can reduce costly interaction with physical systems by training policies on trajectories imagined through a learned dynamics model, but one-step prediction errors may accumulate and distort policy updates or control decisions. Koopman Dreamer replaces the conventional deterministic latent transition in a Dreamer-style controller with a spectrally structured transition whose modal persistence can be explicitly bounded, while retaining the observation, reward, continuation, and actor--critic components. It is intended for continuous-control applications that rely on recursive multi-step prediction, including robotic motion and autonomous navigation. In simulation, it outperformed DreamerV3 on eight of nine proprioceptive tasks and increased UAV-LiDAR target success from 53.8\% to 73.8\%. Because the most contractive spectral setting was not always optimal, practitioners should tune the spectral range using both decoded observation-prediction accuracy and closed-loop performance. Although validation is currently simulation-based, the structured transition can be adapted to other learned-model control and planning frameworks; physical deployment still requires task-specific robustness evaluation and safety assurance.
\end{abstract}
\endgroup

\begin{IEEEkeywords}
model-based reinforcement learning, latent world model, Koopman representation, spectral constraint, continuous control, latent imagination.
\end{IEEEkeywords}
\section{Introduction}\label{i.-introduction}
\IEEEPARstart{M}{odel-based} reinforcement learning improves the efficiency of interaction with real environments by learning environment dynamics \cite{sutton2018rl,chua2018pets,janner2019mbpo}. Instead of relying entirely on trial and error in the environment, a world model can generate future trajectories under learned dynamics and use these imagined trajectories to train the policy and value function \cite{ha2018world,hafner2019planet,hafner2020dreamer,hansen2024tdmpc2}. Latent world models are particularly important for continuous control: they compress observation histories into compact hidden states and perform state prediction, reward prediction, and policy optimization in latent space, thereby avoiding direct long-horizon dynamics modeling in the original high-dimensional observation space \cite{hafner2019planet,hafner2020dreamer,hafner2021dreamerv2,hafner2025dreamerv3}.

A complete latent world model typically consists of observation representation learning, latent-state transition, prediction heads, and latent-space policy optimization \cite{hafner2020dreamer,hafner2025dreamerv3}. The observation encoder maps task observations into compact representations used to infer latent states, while the dynamics module recursively propagates these states over time. Prediction heads decode the resulting latent states to reconstruct observations and estimate rewards and episode-continuation flags, and the policy and value function are optimized on trajectories imagined through the learned dynamics. Consequently, accurate and stable long-horizon latent transitions are important not only for future observation prediction, but also for imagined-return estimation and policy learning.

Recent latent world models commonly parameterize their transitions with flexible recurrent or feed-forward neural networks \cite{hafner2019planet,hafner2020dreamer,hafner2025dreamerv3,hansen2024tdmpc2}. These models are highly expressive, but their objectives do not usually expose or directly parameterize the modal properties of deterministic latent propagation. Consequently, contraction, persistence, and oscillation emerge implicitly from data and optimization. This is not evidence that generic neural transitions are necessarily unstable; rather, it makes their long-horizon behavior difficult to regulate and diagnose. Because Dreamer-style actor--critic updates depend on recursively imagined trajectories, errors in latent transition, reward prediction, and continuation prediction can compound with horizon and bias the resulting return targets.

Dynamics in continuous-control tasks are not completely unstructured. Legged locomotion, pendulum control, robotic reaching, and velocity-control tasks often contain smooth action responses, damping, near-periodic motion, and locally linear evolution \cite{tassa2018dmc,todorov2012mujoco}. If such long-horizon patterns can be explicitly represented in latent space, imagined trajectories produced by the world model may become more stable and easier to diagnose. Koopman operator theory provides a useful perspective: a nonlinear system can be advanced by a linear operator in an appropriate observable-function space \cite{koopman1931,koopman1932,mezic2005,brunton2022modern}. Its structural advantage for world models is that the learned latent transition can be organized around an explicit evolution operator, so that contraction, persistence, and oscillatory behavior are controlled by spectral quantities, providing a direct mechanism to analyze and regulate multi-step error growth. Although a finite-dimensional neural representation can only approximate the Koopman property, a Koopman-inspired linear backbone therefore provides a natural way to inject stability-oriented and diagnostically interpretable structure into long-horizon latent dynamics \cite{williams2015edmd,lusch2018deep,takeishi2017koopman}.

This paper proposes Koopman Dreamer. Its main idea is to make long-horizon deterministic-state propagation an explicitly structured component of the world model, instead of treating it only as a generic function-approximation problem. The complete deterministic latent state is organized as a Koopman-inspired evolution state, and its autonomous dynamics are parameterized by a spectral-radius-controlled linear backbone. The remaining components are arranged around this backbone. A linear action term and a bilinear state--action interaction term adapt the backbone to controlled nonlinear dynamics, stochastic-state modulation carries local posterior correction information into the deterministic transition, and posterior-conditioned teacher and rollout objectives reduce the mismatch between posterior training states and prior imagination states. In this way, Koopman Dreamer retains the Dreamer-style framework for latent imagination and actor--critic learning while concentrating its structural contribution on spectrally constrained latent dynamics.

We evaluate Koopman Dreamer on proprioceptive tasks from the DeepMind Control Suite \cite{tassa2018dmc,todorov2012mujoco} and on simulated UAV navigation \cite{krishnan2021airlearning} with vectorized LiDAR observations \cite{miera2023lidar}. Open-loop evaluations show the most consistent improvements in predicting proprioceptive observations and velocity. Reward prediction also improves on most tasks. In UAV navigation, Koopman Dreamer achieves a higher success rate and a lower overall failure rate than DreamerV3. A spectral-radius sweep and structural ablations provide additional exploratory evidence regarding the roles of modal contraction, exponential moving average (EMA) teacher targets, and state-dependent action effects.

The main contributions of this paper are threefold:

\begin{enumerate}[(i)]
\item
We propose Koopman Dreamer for Dreamer-style continuous control. The deterministic latent state is propagated directly through a Koopman-inspired spectral backbone composed of two-dimensional rotation--scaling blocks with bounded modal radii. This parameterization explicitly represents the damping, persistence, and oscillation of latent modes through learnable modal radii and phases.

\item
We adapt the spectral Koopman backbone to controlled nonlinear dynamics and posterior-free imagination. The transition combines a linear action term, a low-rank bilinear state--action interaction term, and stochastic-state modulation. On top of the standard DreamerV3 prior--posterior learning framework, we design Koopman-specific supervision using posterior-conditioned teacher representations, EMA target projections, one-step Koopman consistency, multi-step teacher and prior rollouts, and open-loop observation prediction. Together, these components train the same structured transition under both observation-corrected training and observation-free imagination.

\item
We establish a long-horizon error-propagation analysis for the controlled Koopman transition. The resulting multi-step bound separates autonomous operator amplification from controlled-interaction effects and additive stochastic and modeling residuals, explaining how the learned spectral radius balances error accumulation against the preservation of persistent dynamics. Experiments on DMC proprioceptive control and UAV-LiDAR navigation, including open-loop prediction, spectral-radius sensitivity analysis, and structural ablations, connect this mechanism to improved long-horizon prediction and closed-loop control.
\end{enumerate}

\section{Related Work}\label{ii.-related-work}

\subsection{Latent World Models}\label{a.-latent-world-models}

Latent world models encode observations or observation histories into compact hidden states and predict the future in latent space. Early model-based reinforcement-learning methods often relied on dynamics models defined either in the original observation/state coordinates or in manually designed compact state representations \cite{chua2018pets,janner2019mbpo}. Directly modeling observations can be high-dimensional and redundant, whereas hand-crafted state representations depend on task-specific engineering. These limitations motivate latent world models that learn predictive hidden states from data. Representative methods such as PlaNet and Dreamer combine observation encoding, latent-state transition, observation reconstruction, reward prediction, and policy optimization through stochastic state-space models, so that the hidden state can both explain observations and support future trajectory imagination \cite{hafner2019planet,hafner2020dreamer,hafner2021dreamerv2,hafner2025dreamerv3}. Although the observation front end depends on the representation provided by the environment, these models share the need to propagate the resulting latent state accurately over imagined trajectories.

Modern latent world models differ in architecture, but they share the goal of learning compact dynamics that can support prediction, planning, or policy improvement. In PlaNet and the original Dreamer line, this role is played by recurrent deterministic memory coupled with stochastic state correction; in other model-based agents, latent dynamics are adapted to planning or actor--critic optimization \cite{hafner2019planet,hafner2020dreamer,hafner2021dreamerv2,hafner2025dreamerv3,hansen2024tdmpc2}. Koopman Dreamer is also a learned latent dynamics model, but it constrains deterministic propagation through a spectral parameterization. This makes quantities such as contraction, rotation, and near-periodic modes part of the transition design rather than properties that must emerge only from data and loss terms. This paper therefore focuses on adding explicit modal structure to the deterministic dynamics core. It formulates deterministic latent-state propagation as a spectrally constrained, Koopman-inspired backbone and organizes observation prediction, reward prediction, episode-continuation prediction, and latent-space policy optimization around this structured rollout mechanism.

\subsection{Koopman Representation Learning}
\label{b.-koopman-representation-learning}

Koopman operator theory shows that nonlinear systems can be described by a linear operator in an appropriate observable-function space \cite{koopman1931,koopman1932,mezic2005,brunton2022modern}. Dynamic mode decomposition and extended dynamic mode decomposition estimate linear evolution operators from data, but they typically rely on manually designed observables \cite{rowley2009spectral,schmid2010dmd,williams2015edmd}. Deep Koopman methods use neural networks to learn observable representations and train them jointly with reconstruction, linear-prediction, and multi-step prediction objectives \cite{takeishi2017koopman,lusch2018deep,kaiser2021eigenfunctions}. These methods have been widely studied in system identification, prediction, and control \cite{korda2018koopmanmpc,narasingam2020deepkoopman,shi2022deepkoopmancontrol}.

Koopman representations have also been integrated directly with reinforcement learning and task-oriented control. Koopman Q-learning learns Koopman latent representations to identify dynamical symmetries for offline data augmentation \cite{weissenbacher2022koopmanq}. Task-oriented Koopman control jointly learns a latent embedding, a Koopman operator, and a differentiable linear controller through end-to-end reinforcement learning, including applications with high-dimensional visual and LiDAR observations \cite{lyu2023taskorientedkoopman}. RoboKoop learns control-conditioned visual Koopman representations for off-policy robotic control \cite{kumawat2025robokoop}. Recent control-oriented studies have further combined Koopman models with model-based reinforcement learning for data-driven optimal control of unknown nonlinear systems \cite{zeng2025koopmancontrol}, receding-horizon direct policy optimization for fixed-wing UAV trajectory tracking \cite{li2026rhdpo}, and receding-horizon actor--critic learning for autonomous-vehicle motion planning \cite{zhang2026lpc}.

Long-horizon prediction errors are closely related to the spectral properties of the dynamics operator. An unconstrained linear operator may amplify perturbations exponentially during recursive rollouts, whereas excessive contraction may erase persistent information \cite{arjovsky2016unitary,miller2019stable}. Spectral constraints regulate these behaviors through bounds on matrix norms, eigenvalue magnitudes, or structured operator parameterizations \cite{brunton2022modern}. These ideas motivate structured evolution operators whose modal amplification, decay, and oscillation can be directly parameterized rather than left implicit.

The preceding Koopman studies demonstrate that learned Koopman representations can support policy learning and control. The present work addresses a different aspect: the explicit spectral organization of the deterministic dynamics core inside a stochastic Dreamer-style latent world model for long-horizon recursive imagination. The deterministic latent state is propagated directly through a Koopman-inspired spectral backbone composed of two-dimensional rotation--scaling blocks, whose bounded radii and learnable phases govern contraction, persistence, and oscillation. Linear and bilinear action terms adapt this spectral backbone to state-dependent controlled dynamics. Posterior-conditioned teacher targets and prior-rollout objectives serve as supporting training mechanisms that allow the same structured transition to operate under observation-corrected representation learning and posterior-free imagination.

\section{Problem Formulation}\label{iii.-problem-formulation}
We consider a partially observable continuous-control problem \cite{sutton2018rl}. At time \(t\), the agent receives an observation \(o_t\), takes an action \(a_t\) according to the history \(h_t=(o_{\le t},a_{<t})\), obtains a reward \(r_t\), and receives an episode-continuation flag \(c_t\in\{0,1\}\). Here, \(c_t=1\) indicates that the episode continues after this step, whereas \(c_t=0\) indicates termination. The objective is to learn a policy \(\pi_\psi(a_t\mid h_t)\) that maximizes the expected discounted return gated by continuation, \(J(\pi_\psi)=\mathbb{E}_{\pi_\psi}[\sum_{t=0}^{\infty}\gamma^t(\prod_{i=0}^{t-1}c_i)r_t]\), where \(\psi\) denotes the policy parameters, \(\gamma\in(0,1)\), and the empty product is defined as 1. This formulation stops return accumulation after episode termination and is consistent with the continuation predictor used for imagined returns in Dreamer-style agents \cite{hafner2025dreamerv3}.

The latent world model compresses the history \(h_t\) into a hidden state and models transition, observation, reward, and episode continuation in latent space. Following the deterministic--stochastic state decomposition used by Dreamer-style RSSM world models \cite{hafner2019planet,hafner2020dreamer,hafner2025dreamerv3}, we write the latent state as \(y_t=(\phi_t,s_t)\), where \(\phi_t\in\mathbb{R}^{D}\) is the deterministic Koopman state and \(s_t\) is the stochastic state. The deterministic component carries historical memory that can be recursively propagated over long horizons, while the stochastic component absorbs current-observation correction and unmodeled local factors. In the implementation, \(s_t\in\{0,1\}^{N\times K}\) consists of \(N\) groups of \(K\)-way one-hot discrete stochastic variables. The categorical distributions are smoothed with unimix to prevent premature collapse of the stochastic state \cite{jang2017gumbel,maddison2017concrete,hafner2021dreamerv2,hafner2025dreamerv3}. In subsequent world-model and actor--critic training, \(y_t\) serves as a latent-space summary of the history \(h_t\), and the policy is therefore optimized in latent space as \(\pi_\psi(a_t\mid y_t)\).

Instead of directly sampling the complete next state \(y_{t+1}\), Koopman Dreamer separates deterministic propagation from the stochastic prior. Given \(y_t=(\phi_t,s_t)\) and action \(a_t\), the Koopman backbone first produces \(\phi_{t+1}=F_K(\phi_t,s_t,a_t)\), and the prior network then predicts \(p_\theta(s_{t+1}\mid \phi_{t+1})\). Thus, the complete latent-state transition can be written as
\begin{equation}
p_\theta(y_{t+1}\mid y_t,a_t)
=
\delta
\left(
\phi_{t+1}-F_K(\phi_t,s_t,a_t)
\right)
p_\theta(s_{t+1}\mid \phi_{t+1}),
\label{eq:latent_transition}
\end{equation}
where \(\theta\) collects the world-model parameters and \(\delta(\cdot)\) denotes the Dirac distribution corresponding to the deterministic map. Equation~\eqref{eq:latent_transition} clarifies that stochasticity in the model arises only from the prior distribution of \(s_{t+1}\), whereas \(\phi_{t+1}\) is produced by the structured Koopman transition. The prediction heads further model observations, rewards, and episode-continuation flags.

During training, the model can use the current observation to construct a posterior state. Let \(e_t=E_\theta(o_t)\) be the observation encoding. The posterior inference distribution of the stochastic state is \(q_\theta(s_t\mid \phi_t,e_t)\), following the prior--posterior inference structure of Dreamer-style world models \cite{hafner2019planet,hafner2025dreamerv3}.

The posterior network only corrects the stochastic state \(s_t\) and does not directly resample \(\phi_t\). The deterministic state \(\phi_t\) is still propagated from the previous posterior state and previous action through the Koopman transition introduced below. The trajectory formed by posterior stochastic states is denoted as \(y_t^{\mathrm{post}}\), and the prior rollout obtained along an action sequence without future observations is denoted as \(\tilde y_t\).

The central objective of this paper is to construct a world model that uses a spectrally controlled deterministic Koopman backbone for long-horizon latent-state propagation. Observation encoding, posterior correction, reward prediction, episode-continuation prediction, and policy imagination are therefore organized around the same controllable latent evolution rather than around the unconstrained deterministic update of the baseline RSSM.
\section{Method}\label{iv.-method}
This section presents the architecture and optimization of Koopman Dreamer. We first describe the overall model, observation interface, and latent-state inference. We then introduce the controlled Koopman transition and its spectral parameterization, followed by the posterior-conditioned teacher representation, world-model objectives, and latent actor--critic optimization.

\subsection{Model Overview}\label{a.-model-overview}
Koopman Dreamer augments a DreamerV3 world model \cite{hafner2025dreamerv3} with a spectrally constrained Koopman transition in place of the deterministic RSSM update, together with posterior-conditioned and posterior-free rollout objectives. The model comprises an observation encoder, the Koopman dynamics, stochastic-state prior and posterior networks, observation/reward/continuation heads, and latent-space actor and value networks. The complete latent state is \(y_t=(\phi_t,s_t)\), where \(\phi_t\) is the deterministic Koopman evolution state and \(s_t\) supplies observation-conditioned stochastic correction. Fig.~\ref{fig:main_framework} summarizes the two complementary learning workflows built around the shared latent dynamics.
\begin{figure*}[!t]
\centering
\subfloat{\includegraphics[width=0.49\textwidth]{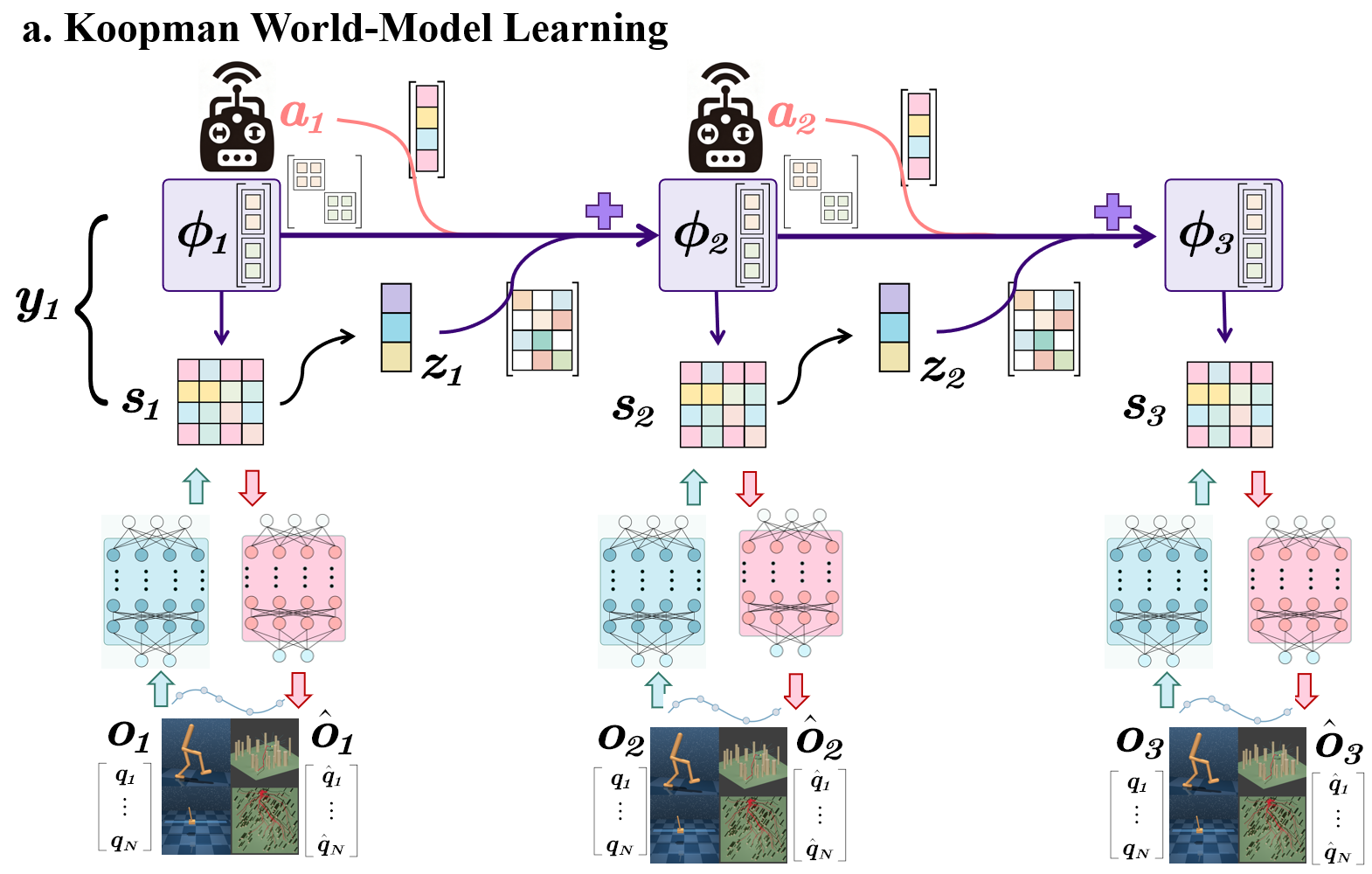}}
\hfill
\subfloat{\includegraphics[width=0.49\textwidth]{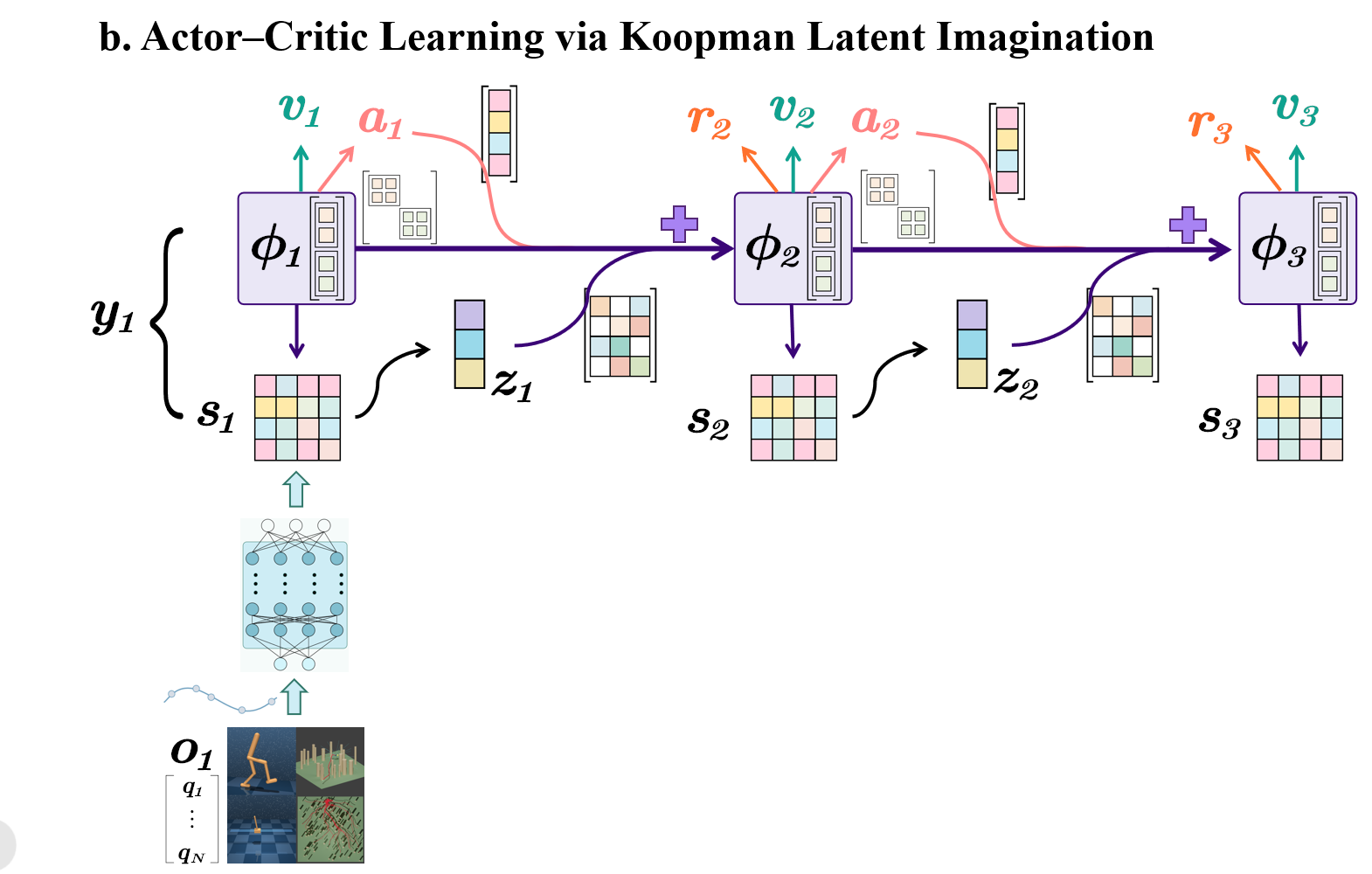}}
\caption{Learning workflows of Koopman Dreamer. (a) Posterior-conditioned world-model learning from observed sequences, where observations provide stochastic-state corrections and prediction targets. (b) Actor--critic learning on prior imagined trajectories initialized from an observation-conditioned state, without access to future observations. Both workflows share the controlled Koopman latent transition detailed in Fig.~\ref{fig:koopman_transition_structure}.}
\label{fig:main_framework}
\end{figure*}
Training and imagination share the same latent dynamics but differ in available information. During the sequence training illustrated in Fig.~\ref{fig:main_framework}(a), the transition first propagates
\(\phi_t=F_K(\phi_{t-1},s_{t-1},a_{t-1})\).
The encoded observation \(e_t=E_\theta(o_t)\) then produces
\(s_t\sim q_\theta(s_t\mid\phi_t,e_t)\), yielding the posterior state
\(y_t^{\mathrm{post}}=(\phi_t,s_t)\) used by the prediction heads and teacher branch. During the open-loop prediction and policy imagination illustrated in Fig.~\ref{fig:main_framework}(b), future observations are unavailable, so stochastic states are obtained from the prior \(p_\theta(s_t\mid\phi_t)\). The auxiliary objectives introduced below train the same structured transition under both information patterns.

\subsection{Observation Encoding and Prediction Heads}\label{b.-observation-encoding-and-prediction-heads}
The observation encoder applies a symlog transform to continuous vector inputs and maps them through an MLP as \(e_t=E_\theta(o_t)\) \cite{hafner2025dreamerv3}. Its input is a proprioceptive state vector for DMC and a task-state--LiDAR vector for UAV navigation. From \(y_t=(\phi_t,s_t)\), the prediction heads model
\(p_\theta(o_t\mid y_t)=\mathrm{Dist}_{o}(D_\theta(y_t))\),
\(p_\theta(r_t\mid y_t)=\mathrm{Dist}_{r}(R_\theta(y_t))\), and
\(p_\theta(c_t\mid y_t)=\mathrm{Bernoulli}(C_\theta(y_t))\).
Continuous observations use symlog reconstruction, rewards use a symexp two-hot distribution, and continuation flags use binary logistic prediction. These heads are supervised on posterior states. On prior imagined states, the reward and continuation heads provide \(\hat r_t\) and \(\hat c_t\) for return estimation.
\subsection{Koopman Latent Transition}\label{c.-koopman-latent-transition}
As shown in Fig.~\ref{fig:koopman_transition_structure}, the Koopman backbone directly propagates the deterministic latent state \(\phi_t\) through a spectrally structured autonomous update augmented by action-dependent and stochastic corrections. The stochastic state \(s_t\) is flattened and linearly projected into a modulation vector \(z_t=g_z(s_t)\), which carries local posterior-correction information into the next deterministic state. The action is first vectorized, and each component is rescaled as
\[
\bar a_{t,j}
=
\frac{[\operatorname{vec}(a_t)]_j}
{\operatorname{sg}\!\left(
\max\!\left(1,\left|[\operatorname{vec}(a_t)]_j\right|\right)
\right)},
\]
where \(j\in\{1,\ldots,d_a\}\) denotes an action-component index, \(d_a=\dim(\operatorname{vec}(a_t))\) is the dimension of the vectorized action, and \(\operatorname{sg}\) denotes stop-gradient. This component-wise rescaling leaves the bounded actions used in our experiments unchanged and only guards against action values outside their nominal range. The Koopman transition is written as
\begin{equation}
\begin{split}
F_K(\phi_t,s_t,a_t)
={}&
\operatorname{clip}_{c_\phi}
\left(
A_K\phi_t+B_a\bar a_t
\right.\\[-1mm]
&\left.\qquad
+H_\theta(\phi_t,\bar a_t)+B_z z_t
\right),
\end{split}
\label{eq:koopman_transition}
\end{equation}
where \(A_K\) is the autonomous Koopman operator, and \(B_a\) and \(B_z\) are learnable linear maps for the action and stochastic modulation, respectively. The element-wise map \(\operatorname{clip}_{c_\phi}(x)=c_\phi\tanh(x/c_\phi)\), with \(c_\phi>0\), bounds the deterministic-state magnitude. The spectral parameterization controls the pre-saturation amplification and modal persistence of the autonomous update. Clipping prevents numerical divergence but does not by itself preserve information or guarantee accurate long-horizon prediction. When a modulation vector is supplied directly, \(F_K^{(z)}(\phi,z,a)\) denotes the same update with \(z\) replacing \(g_z(s)\); thus, \(F_K(\phi,s,a)=F_K^{(z)}(\phi,g_z(s),a)\). The bilinear state--action interaction term adopts a low-rank form:
\begin{equation}
H_\theta(\phi_t,\bar a_t)
=
\beta_b
W_o^b
\left[
\left(W_\phi^b\phi_t\right)
\odot
\left(W_a^b\bar a_t\right)
\right],
\label{eq:bilinear_control}
\end{equation}
where \(\odot\) denotes element-wise multiplication, \(W_\phi^b\) and \(W_a^b\) project the state and action into the same low-rank interaction space, \(W_o^b\) maps the interaction features back to the Koopman-state dimension, and \(\beta_b\) controls the magnitude of the bilinear term. Equation~\eqref{eq:bilinear_control} allows the effect of an action to vary with the current latent state, instead of being restricted to globally fixed action directions through \(B_a\). In ablation configurations, setting \(\beta_b=0\) reduces the transition to one with only the linear action term.

The transition in \eqref{eq:koopman_transition} can be interpreted through four additive components: \(u_t^{K}=A_K\phi_t\) for autonomous spectral latent evolution, \(u_t^{a}=B_a\bar a_t\) for the linear action term, \(u_t^{b}=H_\theta(\phi_t,\bar a_t)\) for state-dependent action modulation, and \(u_t^{z}=B_z z_t\) for local correction information carried by the stochastic state. Thus, \(\phi_{t+1}=\operatorname{clip}_{c_\phi}(u_t^{K}+u_t^{a}+u_t^{b}+u_t^{z})\). This decomposition makes the modeling hierarchy explicit: the long-horizon propagation path is the spectrally controlled autonomous backbone, while the remaining terms express how actions and posterior uncertainty perturb that backbone in controlled tasks.
\begin{figure*}[!t]
\centering
\includegraphics[width=0.98\textwidth]{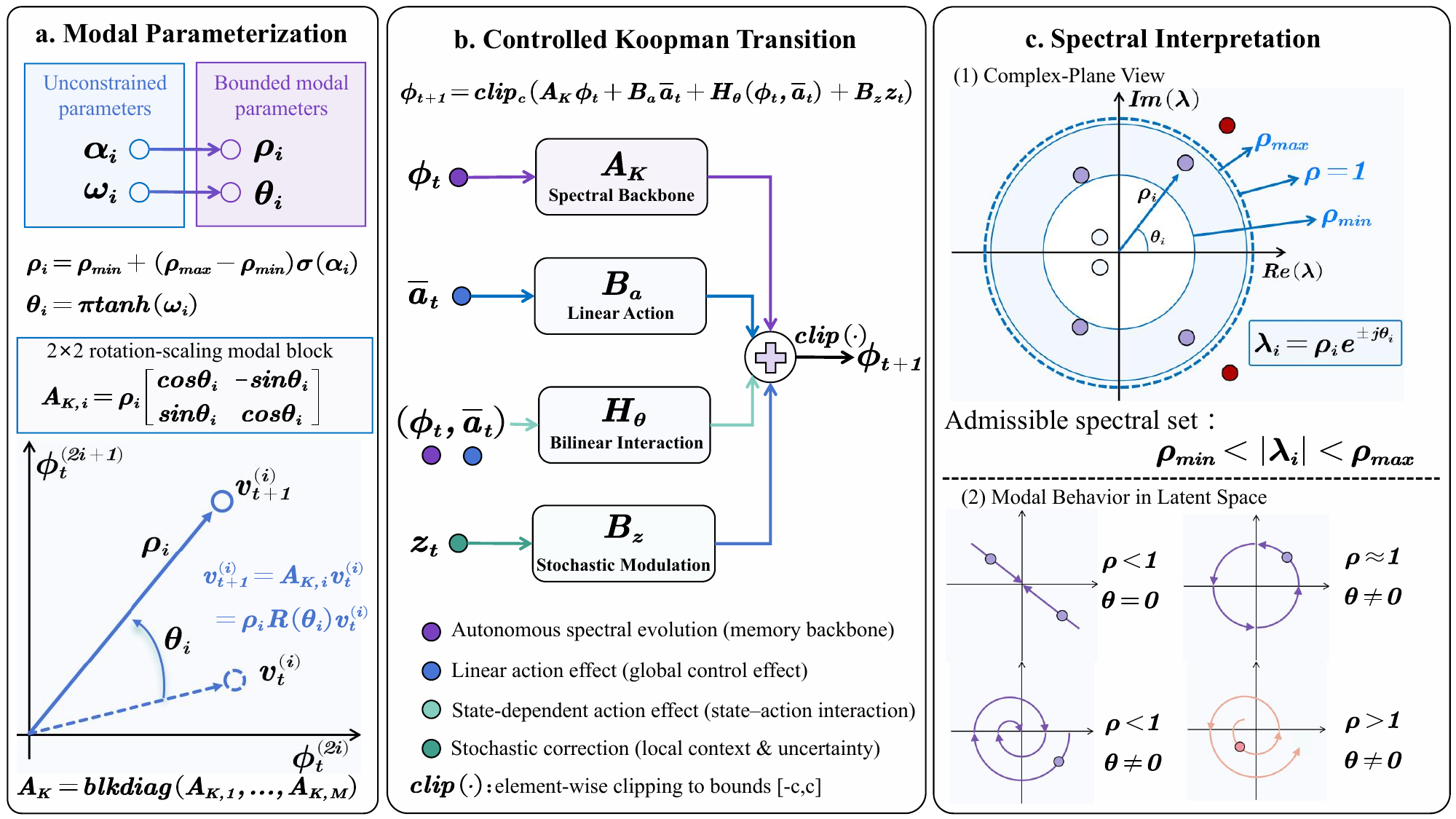}
\caption{Controlled Koopman latent transition and its spectral structure. (a) Bounded modal radii and phases define the two-dimensional rotation--scaling blocks of the autonomous Koopman backbone. (b) The deterministic update combines autonomous spectral evolution, linear and state-dependent action effects, and stochastic-state modulation before smooth element-wise clipping. (c) The eigenvalue geometry and representative latent trajectories illustrate contraction, persistence, rotation, and amplification under different modal radii and phases.}
\label{fig:koopman_transition_structure}
\end{figure*}
\subsection{Spectrally Constrained Koopman Backbone}
\label{d.-spectrally-constrained-koopman-backbone}

As illustrated in Fig.~\ref{fig:koopman_transition_structure}(a) and (c), the Koopman backbone \(A_K\) is parameterized by two-dimensional rotation--scaling blocks, which are the main structural constraint imposed on the deterministic latent dynamics. We use an even deterministic dimension \(D=2M\) throughout this work, so the complete deterministic latent state \(\phi_t\in\mathbb{R}^{D}\) is divided into \(M\) two-dimensional subspaces. For \(i=0,\ldots,M-1\), the action of the Koopman backbone on the \(i\)-th subspace is

\begin{equation}
\begin{bmatrix}
(A_K\phi_t)^{(2i)} \\
(A_K\phi_t)^{(2i+1)}
\end{bmatrix}
=
\rho_i
\begin{bmatrix}
\cos\theta_i & -\sin\theta_i\\
\sin\theta_i & \cos\theta_i
\end{bmatrix}
\begin{bmatrix}
\phi_t^{(2i)} \\
\phi_t^{(2i+1)}
\end{bmatrix}.
\label{eq:spectral_block}
\end{equation}
The modal radius is parameterized as
\(\rho_i=\rho_{\min}+(\rho_{\max}-\rho_{\min})\sigma(\alpha_i)\),
while the phase is
\(\theta_i=\pi\tanh(\omega_{K,i})\).
Here, \(\sigma(\cdot)\) is the logistic sigmoid, and \(\alpha_i\) and \(\omega_{K,i}\) are unconstrained learnable parameters for the modal radius and phase, respectively.

For finite \(\alpha_i\), \(\rho_i\in(\rho_{\min},\rho_{\max})\subseteq[\rho_{\min},\rho_{\max}]\). Since each rotation matrix is orthogonal and \(A_K\) is a block-diagonal normal operator composed entirely of these rotation--scaling blocks, it satisfies

\begin{equation}
\|A_K\|_2
=
\rho(A_K)
=
\max_{0\le i<M}\rho_i
\le
\rho_{\max}.
\label{eq:spectral_bound}
\end{equation}

Here, \(\rho(A_K)\) denotes the spectral radius of \(A_K\).

In the implementation, \(\rho_{\max}\) is usually set close to 1 to allow long-term memory and near-periodic modes, whereas \(\rho_{\min}\) prevents the autonomous modal radii from becoming excessively contractive. It should be emphasized that the spectral constraint in \eqref{eq:spectral_block}--\eqref{eq:spectral_bound} acts directly only on the autonomous Koopman backbone \(A_K\). It does not constitute a global stability proof for the full controlled system that also includes the action term \(B_a\bar a_t\), the bilinear term \(H_\theta\), and the stochastic modulation term \(B_z z_t\). In this paper, the constraint is used as a structured inductive bias for long rollouts, and its empirical role is evaluated through open-loop prediction error, spectral-radius sensitivity analysis, and structural ablation.

\subsection{Stochastic-State Prior and Posterior}\label{e.-stochastic-state-prior-and-posterior}
The stochastic state provides local uncertainty and observation correction around the deterministic Koopman trajectory. Given \(\phi_t\), the prior network produces logits
\(\ell_t^p=P_\theta(\phi_t)\) and
\(p_\theta(s_t\mid\phi_t)=\mathrm{Dist}(\ell_t^p)\).
The posterior additionally conditions on the current observation encoding:
\(\ell_t^q=Q_\theta(\phi_t,e_t)\) and
\(q_\theta(s_t\mid\phi_t,e_t)=\mathrm{Dist}(\ell_t^q)\).
Both distributions use grouped categorical variables with unimix smoothing to avoid premature collapse \cite{hafner2019planet,hafner2021dreamerv2,hafner2025dreamerv3}. Posterior samples provide observation-corrected states for world-model learning and teacher construction, whereas prior samples or means support posterior-free rollout and policy imagination.
\subsection{Posterior-Conditioned Teacher Representation}\label{f.-posterior-conditioned-teacher-representation}
Directly supervising the recursively propagated \(\phi_t\) can couple the transition targets to current model errors. Because the posterior stochastic state contains current-observation information, we use it to construct a corrected teacher trajectory:
\begin{equation}
\begin{aligned}
\bar\phi_t
&=
\operatorname{clip}_{c_\phi}
\left(
\phi_t
+
\beta_{\phi}
\tanh
\left(
W_\phi \operatorname{flat}(s_t)
\right)
\right),\\
\bar z_t
&=
W_z\operatorname{flat}(s_t).
\end{aligned}
\label{eq:teacher_representation}
\end{equation}
Here, \(W_\phi\) and \(W_z\) are teacher projection matrices, and \(\beta_\phi\) bounds the posterior offset. The teacher supplies targets for the same Koopman transition rather than defining a separate dynamics model.

To reduce target drift, the projection parameters are updated in an exponential-moving-average branch,
\(\xi_{\mathrm{ema}}\leftarrow\tau_{\mathrm{ema}}\xi+(1-\tau_{\mathrm{ema}})\xi_{\mathrm{ema}}\), where \(\tau_{\mathrm{ema}}=0.01\) is the weight assigned to the online parameters.
The stop-gradient targets are
\(\bar\phi_t^{\mathrm{tar}}=\operatorname{sg}(\bar\phi_t^{\mathrm{ema}})\) and
\(\bar z_t^{\mathrm{tar}}=\operatorname{sg}(\bar z_t^{\mathrm{ema}})\).
Only the two teacher projection heads have EMA copies; the observation encoder, Koopman transition, prior, and posterior retain their online parameters \cite{tarvainen2017mean,grill2020byol}.
\subsection{World-Model Training Objective}\label{g.-world-model-training-objective}
Let \(q_{b,t}=q_\theta(s_{b,t}\mid\phi_{b,t},e_{b,t})\) and
\(p_{b,t}=p_\theta(s_{b,t}\mid\phi_{b,t})\) for a batch of \(N_b\) sequences of length \(T\). For observation output keys \(m\in\mathcal O\), let
\(\mathcal L_o=\sum_{m\in\mathcal O}\mathcal L_m\) denote the total reconstruction loss. The base world-model objective is
\(\mathcal{L}_{\mathrm{base}}=\lambda_o\mathcal{L}_{o}+\lambda_r\mathcal{L}_{r}+\lambda_c\mathcal{L}_{c}+\lambda_{\mathrm{dyn}}\mathcal{L}_{\mathrm{dyn}}+\lambda_{\mathrm{rep}}\mathcal{L}_{\mathrm{rep}}\).
The observation, reward, and continuation terms use the prediction losses described above. The dynamics term averages
\(\max(\mathrm{KL}(\operatorname{sg}(q_{b,t})\Vert p_{b,t}),\lambda_{\mathrm{free}})\),
and the representation term averages
\(\max(\mathrm{KL}(q_{b,t}\Vert\operatorname{sg}(p_{b,t})),\lambda_{\mathrm{free}})\).
These complementary stop-gradient paths train the prior to follow the posterior while keeping the posterior representation predictable \cite{kingma2014vae,hafner2019planet,hafner2025dreamerv3}. The remaining losses specifically supervise the structured transition.

\textbf{One-step Koopman supervision.}
Local transition consistency is enforced on consecutive EMA teacher states:
\begin{equation}
\begin{aligned}
\mathcal{L}_{\mathrm{koop}}
&=
\frac{1}{N_b(T-1)}
\sum_{b=1}^{N_b}
\sum_{t=0}^{T-2}
\Bigl\|
F_K^{(z)}\bigl(
\operatorname{sg}(\bar\phi_{b,t}^{\mathrm{tar}}),
\operatorname{sg}(\bar z_{b,t}^{\mathrm{tar}}),
a_{b,t}
\bigr)\\
&\quad
-
\operatorname{sg}(\bar\phi_{b,t+1}^{\mathrm{tar}})
\Bigr\|_2^2.
\end{aligned}
\label{eq:koopman_loss}
\end{equation}
Here, \(a_{b,t}\) is the action from \(t\) to \(t+1\). Stop-gradient is applied to the teacher input and target, so this path primarily updates the Koopman backbone and its control terms.

\textbf{Multi-step teacher rollout supervision.}
One-step consistency does not directly expose accumulated recursive error. Each replay sequence supplies one rollout start: for a sequence of length \(T\), we set \(t=0\) and use \(H=T-1\). Starting from
\(\hat\phi_t=\operatorname{sg}(\bar\phi_t^{\mathrm{tar}})\), we roll out
\[
\hat\phi_{t+k}
=
F_K^{(z)}
\left(
\hat\phi_{t+k-1},
\operatorname{sg}(\bar z_{t+k-1}^{\mathrm{tar}}),
a_{t+k-1}
\right)
\]
and minimize
\begin{equation}
\mathcal{L}_{\mathrm{roll}}^{\mathrm{tea}}
=
\frac{1}{N_bH}
\sum_{b=1}^{N_b}
\sum_{k=1}^{H}
w_k^{\mathrm{roll}}
\left\|
\hat\phi_{b,t+k}
-
\operatorname{sg}(\bar\phi_{b,t+k}^{\mathrm{tar}})
\right\|_2^2,
\label{eq:teacher_rollout_loss}
\end{equation}
where \(w_k^{\mathrm{roll}}=1+\tanh(\tau_{\mathrm{roll}}k)\) emphasizes longer horizons for \(\tau_{\mathrm{roll}}\ge0\).

\textbf{Posterior-free rollout supervision.}
Teacher modulation remains observation-conditioned and therefore does not reproduce the information pattern used during imagination. We consequently start from a posterior state and roll out
\(\tilde y_{t+k}\sim p_\theta(\tilde y_{t+k}\mid\tilde y_{t+k-1},a_{t+k-1})\)
without future observations, using the same start \(t\) and horizon \(H\) as in the teacher rollout:
\begin{equation}
\mathcal{L}_{\mathrm{roll}}^{\mathrm{free}}
=
\frac{1}{N_bH}
\sum_{b=1}^{N_b}
\sum_{k=1}^{H}
w_k^{\mathrm{roll}}
\left\|
\bar\phi(\tilde y_{b,t+k})
-
\operatorname{sg}(\bar\phi_{b,t+k}^{\mathrm{tar}})
\right\|_2^2.
\label{eq:free_rollout_loss}
\end{equation}
Here, \(\bar\phi(\tilde y)\) maps a prior rollout state through the online teacher projection head. The combined rollout loss is
\(\mathcal L_{\mathrm{roll}}=(\mathcal L_{\mathrm{roll}}^{\mathrm{tea}}+\mathcal L_{\mathrm{roll}}^{\mathrm{free}})/2\).

\textbf{Open-loop observation prediction.}
Latent alignment alone can yield internally consistent but weakly decodable states. From posterior starting times \(t\in\mathcal S\), we therefore decode observations along prior rollouts:
\begin{equation}
\begin{split}
\mathcal{L}_{\mathrm{pred}}
={}&
\frac{1}{N_b|\mathcal{S}|H}
\sum_{b=1}^{N_b}
\sum_{t\in\mathcal{S}}
\sum_{k=1}^{H}
 w_k^{\mathrm{roll}}\\[-1mm]
&\qquad\times
\sum_{m\in\mathcal{O}}
\ell_m
\left(
D_m(\tilde y_{b,t+k}),
o_{b,t+k}^{m}
\right).
\end{split}
\label{eq:openloop_pred_loss}
\end{equation}
The state \(\tilde y_{b,t+k}\) is generated without observations after the starting time. Here, \(\mathcal S\subseteq\{0,\ldots,T-H-1\}\) is the set of valid posterior starting indices selected from each replay sequence. In the implementation, up to four indices are spaced uniformly over the valid range, with \(H\) shortened when necessary so that every target \(t+k\) remains within the sequence. This loss requires prior states to retain information that can be decoded into future observations.

\textbf{Operator regularization.}
The block parameterization already enforces the spectral interval. We therefore use
\(\mathcal{L}_{\mathrm{opreg}}=\lambda_a\|B_a\|_F^2+\lambda_z\|B_z\|_F^2+\lambda_{\phi}\mathbb{E}_{b,t}[\|\bar\phi_{b,t}-\phi_{b,t}\|_2^2]\)
to limit the linear action correction, stochastic modulation, and posterior-conditioned teacher offset without imposing an additional radius penalty, where \(\mathbb{E}_{b,t}\) denotes the uniform average over batch and time.

The complete world-model objective is
\begin{equation}
\begin{aligned}
\mathcal{L}_{\mathrm{wm}}
&=
\mathcal{L}_{\mathrm{base}}
+
\alpha_{\mathrm{koop}}(n)\lambda_{\mathrm{koop}}\mathcal{L}_{\mathrm{koop}}
+
\alpha_{\mathrm{roll}}(n)\lambda_{\mathrm{roll}}\mathcal{L}_{\mathrm{roll}}\\
&\quad
+
\alpha_{\mathrm{pred}}(n)\lambda_{\mathrm{pred}}\mathcal{L}_{\mathrm{pred}}
+
\alpha_{\mathrm{opreg}}(n)\lambda_{\mathrm{opreg}}\mathcal{L}_{\mathrm{opreg}},
\end{aligned}
\label{eq:world_model_objective}
\end{equation}
where \(n\) is the optimization step and each
\(\alpha(n)\in[0,1]\) linearly warms up its corresponding term. The warmups allow the model to establish its latent representation and prediction heads before the propagation-specific supervision reaches full strength.
\subsection{Latent Imagination and Policy Optimization}\label{h.-latent-imagination-and-policy-optimization}
The policy and value function take the full latent state \(y_t\) as input. Starting states are sampled from posterior replay sequences, after which the policy samples
\(a_t\sim\pi_\psi(a_t\mid y_t)\) and the prior transition generates the next latent state without future observations. The reward and continuation heads provide \(\hat r_t\) and \(\hat c_t\) along these trajectories \cite{hafner2020dreamer,hafner2025dreamerv3}. The recursive return is
\begin{equation}
G_t^\lambda
=
\hat r_t
+
\gamma\hat c_t
\left[
(1-\lambda)V_\omega(y_{t+1})
+
\lambda G_{t+1}^\lambda
\right],
\label{eq:lambda_return}
\end{equation}
with terminal bootstrap from the final imagined value. Imagined losses are weighted by
\(w_0^{\mathrm{imag}}=1\) and
\(w_t^{\mathrm{imag}}=\prod_{i=0}^{t-1}\gamma\hat c_i\), so states following a low continuation probability contribute less to learning.

The value head predicts a categorical distribution \(p_\omega^V(\cdot\mid y_t)\) over symexp-spaced bins. It is trained by continuation-weighted two-hot cross-entropy against the return target \(G_t^\lambda\), together with slow-value regularization toward the prediction of a slowly updated value network, weighted by \(\beta_{\mathrm{slow}}\).

For policy optimization, let \(s_R\) be the running 5th-to-95th percentile range of imagined returns, lower-bounded by 1, and define the normalized advantage \(\widetilde A_t=(G_t^\lambda-V_\omega(y_t))/s_R\). The actor uses continuation-weighted policy gradients with entropy regularization coefficient \(\eta\). Return targets, continuation weights, and advantages are stop-gradient quantities, and actor gradients are not propagated into the world model. Because value and policy learning are both evaluated on prior imagined trajectories, their targets directly depend on the long-horizon accuracy of the structured Koopman transition.
\section{Theoretical Analysis}
\label{sec:theory}

We analyze how the spectral backbone affects error propagation in recursive latent rollouts. All vector norms are Euclidean and matrix norms are induced 2-norms. Let
\(\rho_\star=\rho(A_K)\). Because \(A_K\) is block diagonal with normal rotation--scaling blocks,
\begin{equation}
\begin{aligned}
\|A_K\|_2 &= \rho_\star=\max_i\rho_i\le\rho_{\max},\\
\|A_K^H\|_2 &= \rho_\star^H\le\rho_{\max}^H.
\end{aligned}
\label{eq:theory_spectral_norm}
\end{equation}
Thus, \(\rho_\star\) directly controls autonomous modal amplification before clipping: smaller values attenuate perturbations, whereas values near one preserve slowly varying and oscillatory information.

\subsection{Controlled Rollout-Error Bound}

Define the pre-clipping update
\begin{equation}
\mathcal U_K(\phi,z,\bar a)
=A_K\phi+B_a\bar a+H_\theta(\phi,\bar a)+B_z z.
\label{eq:theory_preclip_update}
\end{equation}
The model follows \(\phi_{t+1}=\operatorname{clip}_{c_\phi}(\mathcal U_K(\phi_t,z_t,\bar a_t))\). We compare it with a reference trajectory
\begin{equation}
\begin{aligned}
\phi'_{t+1}
&=\operatorname{clip}_{c_\phi}
  (\mathcal U_K(\phi'_t,z'_t,\bar a_t))+\zeta_t,\\
\|\zeta_t\|&\le\epsilon_t,
\end{aligned}
\label{eq:theory_reference_rollout}
\end{equation}
where \((\phi'_t,z'_t)\) is a comparison trajectory in the same latent space and \(\zeta_t\) is its one-step transition residual. On the bounded latent region visited during training and imagination, assume that the bilinear term is locally Lipschitz:
\begin{equation}
\begin{aligned}
&\|H_\theta(\phi,\bar a)-H_\theta(\phi',\bar a')\|\\
&\quad\le L_\phi^b\|\phi-\phi'\|
       +L_a^b\|\bar a-\bar a'\|.
\end{aligned}
\label{eq:theory_bilinear_lipschitz}
\end{equation}

\begin{proposition}[Controlled rollout-error propagation]
For a shared action sequence, let \(d_t^\phi=\phi_t-\phi'_t\) and define
\begin{equation}
\begin{aligned}
\kappa_\star&=\rho_\star+L_\phi^b,\\
\delta_t&=\|B_z\|_2\|z_t-z'_t\|+\epsilon_t.
\end{aligned}
\label{eq:theory_kappa_delta}
\end{equation}
Then
\begin{equation}
\|d_{t+1}^\phi\|\le\kappa_\star\|d_t^\phi\|+\delta_t,
\label{eq:theory_one_step}
\end{equation}
and, after \(H\) recursive steps,
\begin{equation}
\|d_{t+H}^\phi\|
\le
\kappa_\star^H\|d_t^\phi\|
+\sum_{i=0}^{H-1}\kappa_\star^{H-1-i}\delta_{t+i}.
\label{eq:theory_multistep}
\end{equation}
\end{proposition}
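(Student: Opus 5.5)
The plan is to prove the one-step inequality \eqref{eq:theory_one_step} directly from the structure of the update, and then obtain \eqref{eq:theory_multistep} by unrolling it with a standard discrete Gr\"onwall (induction) argument.

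\textbf{Step 1: the clipping map is nonexpansive.} The map \(\operatorname{clip}_{c_\phi}(x)=c_\phi\tanh(x/c_\phi)\) acts element-wise, and its scalar derivative \(1-\tanh^2(x/c_\phi)\) lies in \((0,1]\). By the mean value theorem each coordinate is 1-Lipschitz. Summing squares gives
\[
\|\operatorname{clip}_{c_\phi}(u)-\operatorname{clip}_{c_\phi}(u')\|\le\|u-u'\|
\]
in the Euclidean norm.

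\textbf{Step 2: the one-step bound.} Subtract \eqref{eq:theory_reference_rollout} from the model recursion and apply the triangle inequality:
\[
\|d_{t+1}^\phi\|\le\|\operatorname{clip}_{c_\phi}(\mathcal U_K(\phi_t,z_t,\bar a_t))-\operatorname{clip}_{c_\phi}(\mathcal U_K(\phi'_t,z'_t,\bar a_t))\|+\|\zeta_t\|.
\]
By Step~1 the first term is at most \(\|\mathcal U_K(\phi_t,z_t,\bar a_t)-\mathcal U_K(\phi'_t,z'_t,\bar a_t)\|\). Using \eqref{eq:theory_preclip_update}, the action term \(B_a\bar a_t\) cancels because the action sequence is shared. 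What remains is
\[
A_K d_t^\phi+\bigl(H_\theta(\phi_t,\bar a_t)-H_\theta(\phi'_t,\bar a_t)\bigr)+B_z(z_t-z'_t).
\]
We bound the three pieces separately:
\begin{itemize}
\item \(\|A_K d_t^\phi\|\le\rho_\star\|d_t^\phi\|\), by \eqref{eq:theory_spectral_norm};
\item the bilinear difference is at most \(L_\phi^b\|d_t^\phi\|\), by \eqref{eq:theory_bilinear_lipschitz} with \(\bar a=\bar a'\), so the \(L_a^b\) term vanishes;
\item \(\|B_z(z_t-z'_t)\|\le\|B_z\|_2\|z_t-z'_t\|\).
\end{itemize}
Combining these with \(\|\zeta_t\|\le\epsilon_t\) gives \eqref{eq:theory_one_step}, with \(\kappa_\star\) and \(\delta_t\) as in \eqref{eq:theory_kappa_delta}.

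\textbf{Step 3: unrolling.} We prove \eqref{eq:theory_multistep} by induction on \(H\). The case \(H=1\) is \eqref{eq:theory_one_step} itself. For the inductive step, apply \eqref{eq:theory_one_step} at time \(t+H\) and substitute the hypothesis:
\[
\|d^\phi_{t+H+1}\|\le\kappa_\star\Bigl(\kappa_\star^H\|d_t^\phi\|+\sum_{i=0}^{H-1}\kappa_\star^{H-1-i}\delta_{t+i}\Bigr)+\delta_{t+H}.
\]
Since \(\kappa_\star\ge0\), multiplying through preserves the inequality. The right-hand side is exactly the claimed bound at \(H+1\).

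\textbf{Main obstacle.} The algebra is routine. The only delicate point is the applicability of the local Lipschitz assumption \eqref{eq:theory_bilinear_lipschitz}: both trajectories must remain in the bounded region on which it holds. I would handle this by noting that the clipped states satisfy \(\|\phi_t\|_\infty<c_\phi\), so the region is bounded. For the reference trajectory, the residuals \(\zeta_t\) must keep \(\phi'_t\) in the region, and I would state this as part of the standing assumption.

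The second point to state explicitly is that \(\|A_K\|_2=\rho_\star\), rather than merely \(\rho(A_K)=\rho_\star\). This relies on normality of \(A_K\), which is already recorded in \eqref{eq:theory_spectral_norm}.
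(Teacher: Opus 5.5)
Your proposal is correct and follows the paper's own proof step for step. It uses the nonexpansiveness of \(\operatorname{clip}_{c_\phi}\), the cancellation of \(B_a\bar a_t\) under shared actions, the bounds \(\|A_K\|_2=\rho_\star\), \(L_\phi^b\), \(\|B_z\|_2\) and \(\|\zeta_t\|\le\epsilon_t\), and then repeated substitution, which you make explicit as an induction. As a minor aside, for a fixed action \(H_\theta(\phi,\bar a)=\beta_b W_o^b\operatorname{diag}(W_a^b\bar a)W_\phi^b\phi\) is linear in \(\phi\) and \(\|\bar a_t\|_\infty\le 1\), so the \(\phi\)-Lipschitz bound holds globally; your concern about the reference trajectory leaving the bounded region therefore only affects the \(L_a^b\) term, which does not appear here.
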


\begin{IEEEproof}
The element-wise map \(\operatorname{clip}_{c_\phi}(x)=c_\phi\tanh(x/c_\phi)\) is 1-Lipschitz. Hence, the difference between clipped updates is bounded by the difference between their inputs. Under shared actions, the linear action terms cancel. Applying \(\|A_K\|_2=\rho_\star\), the Lipschitz bound in \eqref{eq:theory_bilinear_lipschitz}, and \(\|\zeta_t\|\le\epsilon_t\) gives \eqref{eq:theory_one_step}; repeated substitution yields \eqref{eq:theory_multistep}.
\end{IEEEproof}

The bound separates multiplicative propagation, governed by the spectral backbone and state-dependent bilinear interaction through \(\kappa_\star\), from the additive stochastic-modulation mismatch and modeling residual in \(\delta_t\). If the action sequences differ, \(\delta_t\) gains the term
\((\|B_a\|_2+L_a^b)\|\bar a_t-\bar a'_t\|\). The shared-action form in \eqref{eq:theory_multistep} therefore applies directly to our open-loop diagnostics, while the additional term accounts for policy-induced action differences during imagination. This is a local rollout-error result under the stated assumptions, not a global stability guarantee for the controlled system.

\subsection{Implications for Long-Horizon Learning}

If \(\delta_t\le\bar\delta\), then
\begin{equation}
\begin{aligned}
\|d_{t+H}^\phi\|
&\le\kappa_\star^H\|d_t^\phi\|
  +\bar\delta S_H(\kappa_\star),\\
S_H(\kappa)
&=
\begin{cases}
(1-\kappa^H)/(1-\kappa), & \kappa\ne1,\\
H, & \kappa=1.
\end{cases}
\end{aligned}
\label{eq:theory_multistep_delta}
\end{equation}
For \(\kappa_\star<1\), initial errors decay and accumulated residuals remain bounded by \(\bar\delta/(1-\kappa_\star)\); at \(\kappa_\star=1\), residuals grow linearly; and for \(\kappa_\star>1\), both terms can grow geometrically. Hence, reducing \(\rho_\star\) improves attenuation but may erase persistent task information, while increasing it preserves long-term modes at the cost of greater error amplification. This trade-off motivates the spectral-radius study in Section~\ref{c.-spectral-radius-and-long-horizon-prediction-error}.

The training objectives target complementary terms in the bound. The spectral parameterization restricts the autonomous contribution to \(\kappa_\star\). The one-step Koopman loss reduces empirical transition residuals; the teacher and posterior-free rollout losses penalize their recursive accumulation; and the open-loop prediction loss requires prior states to remain decodable. Finally, the dynamics and representation KL terms encourage posterior--prior alignment and thereby target the stochastic contribution \(\|B_z\|_2\|z_t-z'_t\|\). The spectral structure therefore regulates error propagation, while the auxiliary objectives reduce the errors injected at each step.
\section{Experiments and Analysis}\label{v.-experiments-and-analysis}

This section evaluates Koopman Dreamer from five perspectives. First, we compare continuous-control performance and sample efficiency on DMC proprioceptive tasks. Second, we evaluate DMC open-loop prediction to relate control performance to long-horizon model accuracy. Third, we examine closed-loop navigation and open-loop prediction in the UAV-LiDAR Forest scenario. Fourth, we study how the learned spectral radius affects long-horizon prediction. Finally, structural ablations isolate the contributions of the spectral constraint, posterior-conditioned teacher, and bilinear control term. Experimental configurations and complete supplementary results are provided in the appendix.

\subsection{DMC Proprioceptive Continuous Control}
\label{a.-dmc-proprioceptive-continuous-control}

\textbf{Setup.}
We evaluate nine proprioceptive tasks from the DeepMind Control Suite \cite{tassa2018dmc,todorov2012mujoco}, covering swing-up, reaching, balance, and locomotion. The comparison includes PPO, D4PG, DDPG, MPO, DMPO, and DreamerV3 \cite{schulman2017ppo,lillicrap2015ddpg,barthmaron2018d4pg,abdolmaleki2018mpo,abdolmaleki2020distributional,hafner2025dreamerv3}. The baseline curves and scores are taken from the public DMC proprioceptive benchmark reported with DreamerV3, whereas Koopman Dreamer is trained in this work. All methods use low-dimensional state observations, a 500K-step interaction budget, and action repeat 2. Following the public protocol, we average the last recorded score over the available seeds. For Koopman Dreamer, we use \([\rho_{\min},\rho_{\max}]=[0.85,0.95]\). Complete benchmark-source and hyperparameter details are given in Tables~\ref{tab:appendix_acme_config} and \ref{tab:appendix_dmc_config}.

\textbf{Control performance.}
Table~\ref{tab:dmc_last50} shows that Koopman Dreamer achieves the best final score on six tasks and exceeds DreamerV3 on eight. The largest gains occur on Acrobot and Hopper Stand. Fig.~\ref{fig:dmc_learning_curves} further shows continued improvement on Acrobot, an earlier high-return plateau on Hopper Stand, and faster entry into the high-return region on Reacher Hard and Walker Walk. These tasks contain smooth, persistent, or near-periodic dynamics, for which explicitly controlling latent modal persistence is well matched to multi-step imagination.

\begin{table*}[!t]
\caption{Final Scores on Nine DMC Proprioceptive Tasks}
\label{tab:dmc_last50}
\centering
\footnotesize
\setlength{\tabcolsep}{3pt}
\begin{tabular*}{\textwidth}{@{\extracolsep{\fill}}lrrrrrrr@{}}
\toprule
Task & PPO & D4PG & DDPG & MPO & DMPO & DreamerV3 & \shortstack{Koopman\\Dreamer} \\
\midrule
Acrobot & 2.0 & 120.2 & 87.4 & 79.7 & 118.2 & 131.7 & \textbf{292.3} \\
Cartpole & 265.8 & \textbf{876.2} & 863.0 & 857.2 & 859.6 & 856.0 & 873.3 \\
Cheetah & 100.5 & 630.1 & 602.6 & 646.9 & 603.4 & 596.3 & \textbf{692.8} \\
Hopper Stand & 0.0 & 732.3 & 618.9 & 450.3 & 545.3 & 650.4 & \textbf{859.1} \\
Reacher & 662.2 & 973.3 & 964.0 & 967.8 & 977.7 & 961.8 & \textbf{979.0} \\
Reacher Hard & 168.4 & 967.0 & 973.6 & 954.5 & 965.3 & 948.6 & \textbf{987.0} \\
Walker Stand & 151.7 & 917.2 & 974.4 & 959.2 & 980.2 & 957.2 & \textbf{982.4} \\
Walker Walk & 77.0 & \textbf{961.1} & 954.0 & 929.6 & 938.9 & 881.0 & 942.3 \\
Walker Run & 31.4 & 611.9 & 564.3 & 570.5 & 511.5 & \textbf{655.5} & 572.0 \\
\bottomrule
\end{tabular*}
\par\vspace{0.35em}
\parbox{\textwidth}{\scriptsize Each entry is the mean across seeds of the last recorded score; bold indicates the best result in each row.}
\end{table*}

\begin{figure*}[!t]
\centering
\includegraphics[width=0.31\textwidth]{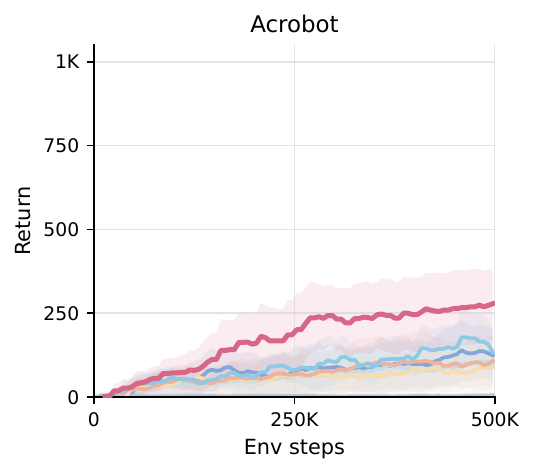}
\hfil
\includegraphics[width=0.31\textwidth]{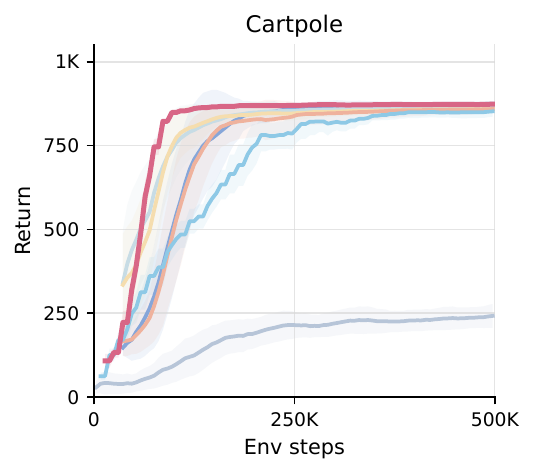}
\hfil
\includegraphics[width=0.31\textwidth]{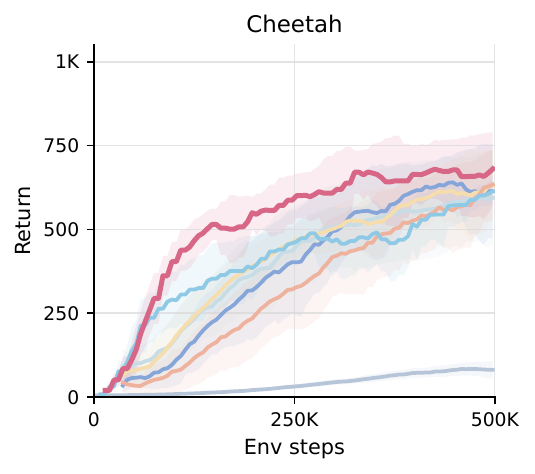}
\par\vspace{0.35em}
\includegraphics[width=0.31\textwidth]{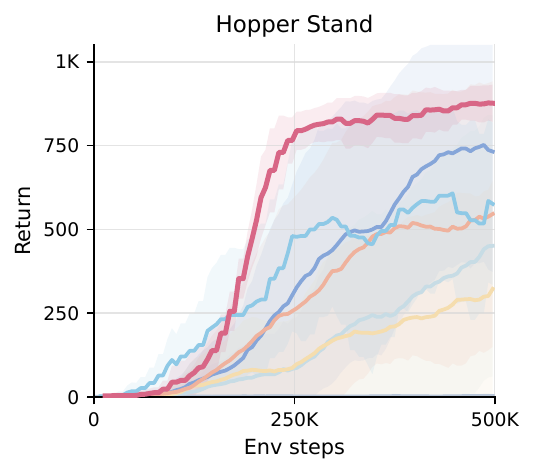}
\hfil
\includegraphics[width=0.31\textwidth]{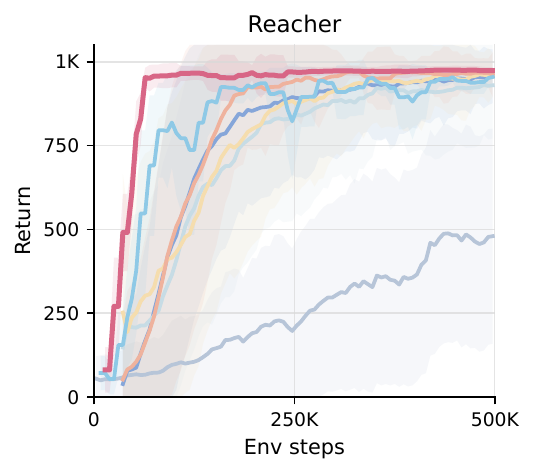}
\hfil
\includegraphics[width=0.31\textwidth]{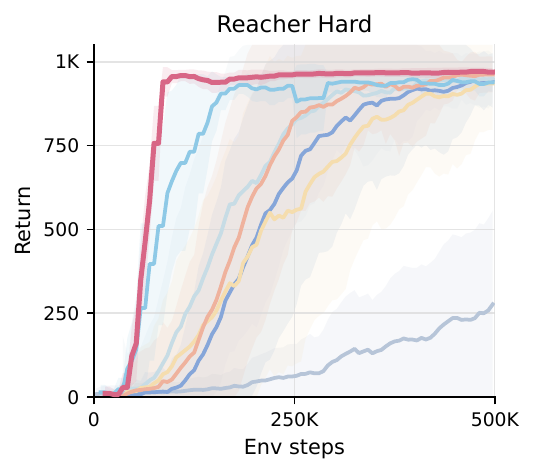}
\par\vspace{0.35em}
\includegraphics[width=0.31\textwidth]{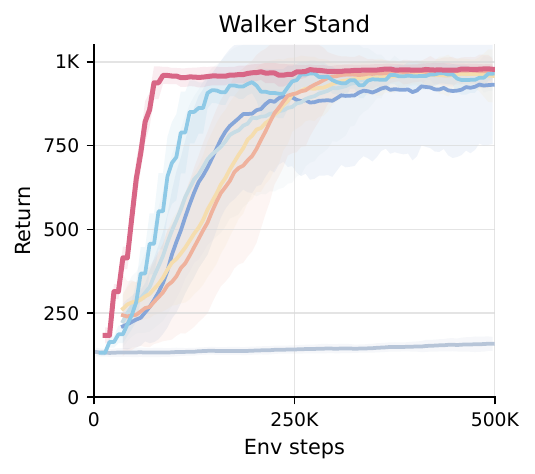}
\hfil
\includegraphics[width=0.31\textwidth]{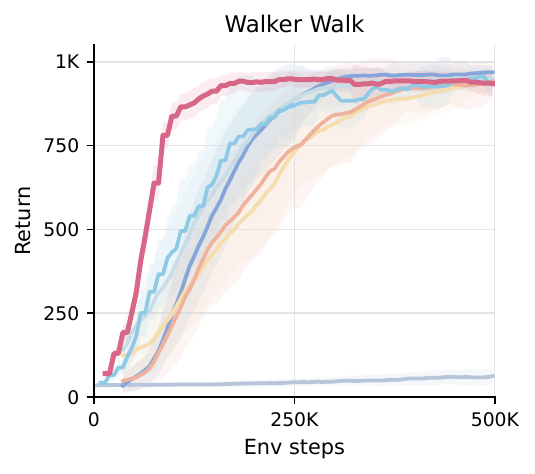}
\hfil
\includegraphics[width=0.31\textwidth]{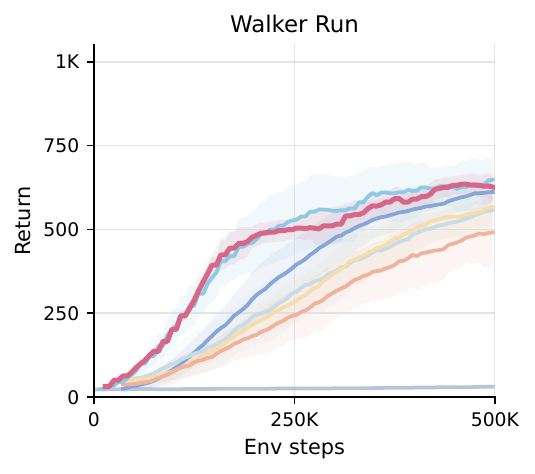}
\par\vspace{0.35em}
\includegraphics[width=0.75\textwidth]{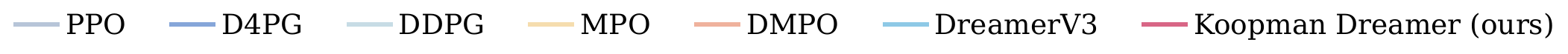}
\caption{Learning curves on nine DMC proprioceptive tasks.}
\label{fig:dmc_learning_curves}
\end{figure*}

Fig.~\ref{fig:dmc_threshold} summarizes sample efficiency using a task-specific target equal to 90\% of the best final score. Koopman Dreamer reaches this target fastest on eight tasks and earlier than DreamerV3 on the same eight. Together, the final scores and learning speeds indicate that the spectral transition improves not only the attainable policy quality but also how rapidly imagined rollouts become useful for policy optimization.

\begin{figure*}[!t]
\centering
\includegraphics[width=0.86\textwidth]{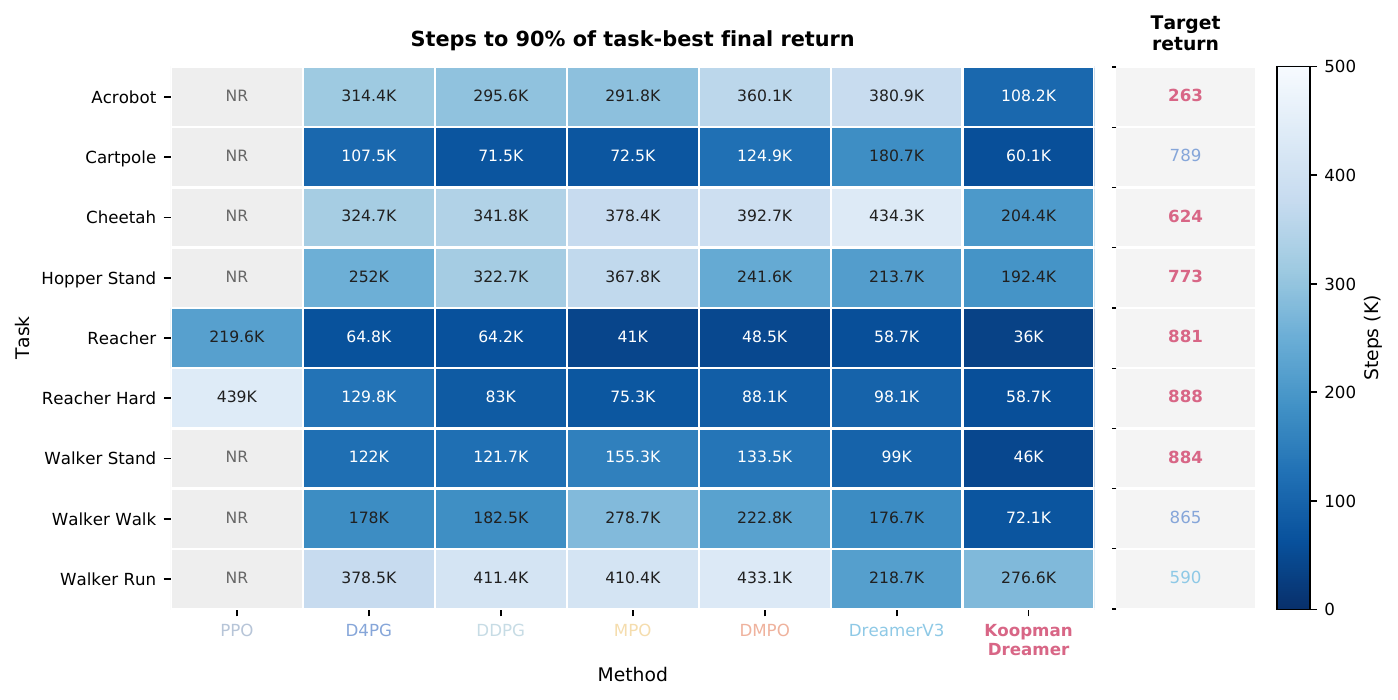}
\caption{Environment steps required to reach 90\% of the task-best final return. NR denotes that the target was not reached within 500K steps.}
\label{fig:dmc_threshold}
\end{figure*}

\textbf{Open-loop prediction.}
Each world model is initialized from a 32-step observation context and rolled out for 64 steps along recorded actions without future observations. Errors are averaged over 32 replay batches with 16 sequences per batch. Table~\ref{tab:dmc_openloop_summary} reports the resulting cross-task errors; complete task-level curves are provided in Figs.~\ref{fig:appendix_dmc_openloop_all_1} and \ref{fig:appendix_dmc_openloop_all_2}.

\begin{table*}[!t]
\caption{Cross-Task Open-Loop Prediction MSE and Comparative Summary on DMC Proprioceptive Tasks}
\label{tab:dmc_openloop_summary}
\centering
\footnotesize
\setlength{\tabcolsep}{6pt}
\renewcommand{\arraystretch}{1.08}
\begin{tabular*}{\textwidth}{@{\extracolsep{\fill}}llcccc@{}}
\toprule
\multicolumn{2}{c}{Evaluation setting} & \multicolumn{2}{c}{Prediction MSE} & \multicolumn{2}{c}{Cross-method comparison} \\
\cmidrule(lr){1-2}\cmidrule(lr){3-4}\cmidrule(lr){5-6}
Evaluation quantity & Rollout summary & DreamerV3 & \shortstack{Koopman\\Dreamer} & \shortstack{Relative MSE\\(Koopman Dreamer/\\DreamerV3)} & \shortstack{Tasks favoring\\Koopman Dreamer} \\
\midrule
Proprioceptive observation & Mean over $H=1{:}64$ & 0.6256 & \textbf{0.4802} & 0.768 & 8/9 \\
& At $H=64$ & 0.7960 & \textbf{0.6953} & 0.873 & 7/9 \\
\addlinespace[2pt]
Reward & Mean over $H=1{:}64$ & 0.9290 & \textbf{0.7318} & 0.788 & 8/9 \\
& At $H=64$ & \textbf{1.1920} & 1.2588 & 1.056 & 8/9 \\
\addlinespace[2pt]
Deterministic latent state & Mean over $H=1{:}64$ & 0.0379 & \textbf{0.00403} & 0.106 & 9/9 \\
& At $H=64$ & 0.0667 & \textbf{0.00698} & 0.105 & 9/9 \\
\bottomrule
\end{tabular*}
\renewcommand{\arraystretch}{1.0}
\par\vspace{0.35em}
\parbox{\textwidth}{\scriptsize Each model is initialized from a 32-step observation context and then rolled out along recorded action sequences without future observations up to horizon 64. The two method columns report MSE averaged across the nine DMC tasks; bold indicates the lower cross-task MSE in each row. ``Mean over \(H=1{:}64\)'' averages prediction error over all 64 open-loop rollout steps. Relative MSE is Koopman Dreamer divided by DreamerV3, so a value below one favors Koopman Dreamer. The final column counts the tasks, out of nine, on which Koopman Dreamer obtains lower MSE.}
\end{table*}

Koopman Dreamer reduces mean deterministic-latent MSE by 89.4\% and has lower horizon-64 latent error on all nine tasks. More importantly, the improvement transfers to shared observation coordinates: mean proprioceptive MSE falls by 23.2\%, with lower error on eight tasks. Reward prediction also improves on eight tasks, and its cross-task MSE averaged over the rollout decreases from 0.9290 to 0.7318. Because separately learned latent coordinates are not directly comparable physical quantities, decoded observation error provides the stronger evidence. The agreement between latent consistency, observable prediction, and closed-loop performance supports the intended role of the spectral backbone: it limits recursive drift while retaining task-relevant dynamics for imagination.

\subsection{UAV-LiDAR Autonomous Navigation}
\label{b.-uav-lidar-navigation-experiments}

\textbf{Environment and protocol.}
The UAV simulator is implemented as a Gymnasium environment \cite{towers2024gymnasium}. The UAV uses three-dimensional velocity commands \(a_t\in[-1,1]^3\), and its observation combines velocity, normalized height, target-relative position, previous action, and LiDAR ranges. Table~\ref{tab:uav_scenes} summarizes the three environments. The main Forest benchmark uses a 50\,m map, 120 cylindrical obstacles, and a 280-dimensional seven-ring LiDAR, coupling long-horizon target progress with dense obstacle sensing. The simulator uses a 0.1\,s control interval, maximum horizontal and vertical speeds of 2.0 and 1.0\,m/s, respectively, and a 500-step episode limit.

\begin{table}[!t]
\caption{UAV-LiDAR Scenario Settings}
\label{tab:uav_scenes}
\centering
\footnotesize
\setlength{\tabcolsep}{2pt}
\begin{tabular}{@{}p{0.16\columnwidth}p{0.43\columnwidth}p{0.33\columnwidth}@{}}
\toprule
Scenario & Setting & Experimental role \\
\midrule
Empty & No obstacle, 20 m, 96-dimensional 3-ring LiDAR & Environment and action-dynamics check \\
Sparse & 20 m, 20 sparse cylindrical obstacles, 96-dimensional 3-ring LiDAR & Sample-efficiency and prediction evaluation \\
Forest & 50 m, 120 dense cylindrical obstacles, 280-dimensional 7-ring LiDAR & Main autonomous navigation evaluation \\
\bottomrule
\end{tabular}
\end{table}

We compare D4PG, DreamerV3, and Koopman Dreamer on four targets. For each method, one fixed checkpoint is evaluated without retraining in three Forest scenes with different obstacle placements shared across methods. Each target uses 20 matched initial conditions per scene, yielding \(240\) closed-loop episodes per method. All methods use a 500K-step training budget. DreamerV3 and Koopman Dreamer use \texttt{size25m} configurations with batch size 16, sequence length 64, imagination horizon 15, and \(\lambda\)-return parameter 0.95. Koopman Dreamer uses a 3072-dimensional deterministic state, 32 groups of 24-way stochastic variables, a 96-dimensional stochastic projection, \([\rho_{\min},\rho_{\max}]=[0.75,0.95]\), and a rank-256 bilinear term; its learned spectral radius is 0.881. Open-loop prediction is evaluated only for DreamerV3 and Koopman Dreamer because D4PG has no world model. Complete environment, evaluation, and method configurations are reported in Tables~\ref{tab:appendix_uav_config} and \ref{tab:appendix_uav_method_config}.

\textbf{Closed-loop navigation.}
The trajectory views in Fig.~\ref{fig:uav_forest_trajectory} show that high-return Koopman Dreamer rollouts more consistently approach the target region with less extended wandering. These selected trajectories are qualitative illustrations rather than the basis of the comparison, but they visualize the role of state-dependent control: the bilinear term allows the same velocity command to have different latent effects across positions, velocities, and obstacle configurations.

\begin{figure*}[!t]
\centering
\begin{minipage}[t]{0.325\textwidth}
\centering
\footnotesize\textbf{\strut D4PG}\par\vspace{0.15em}
\includegraphics[width=0.49\linewidth]{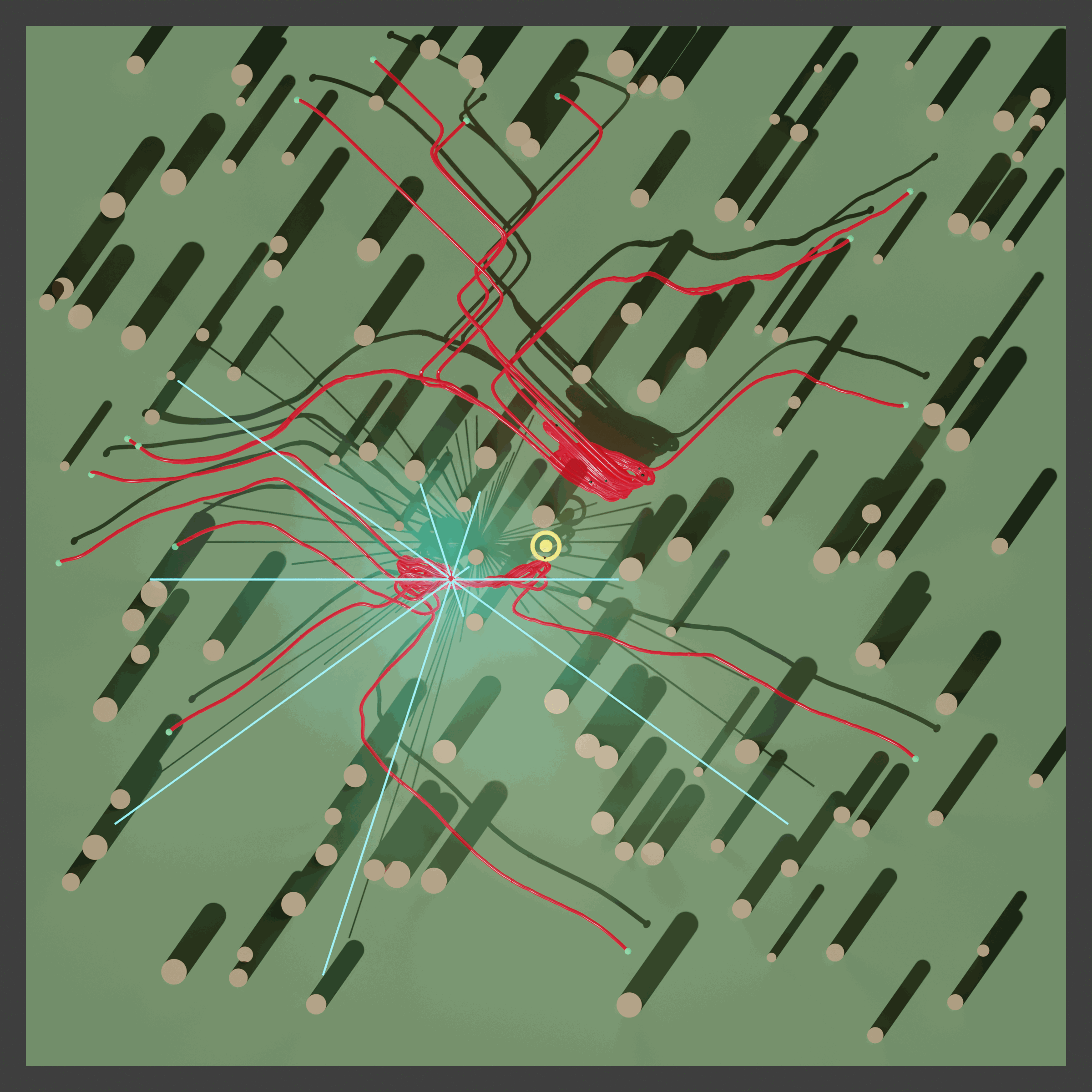}\hfill
\includegraphics[width=0.49\linewidth]{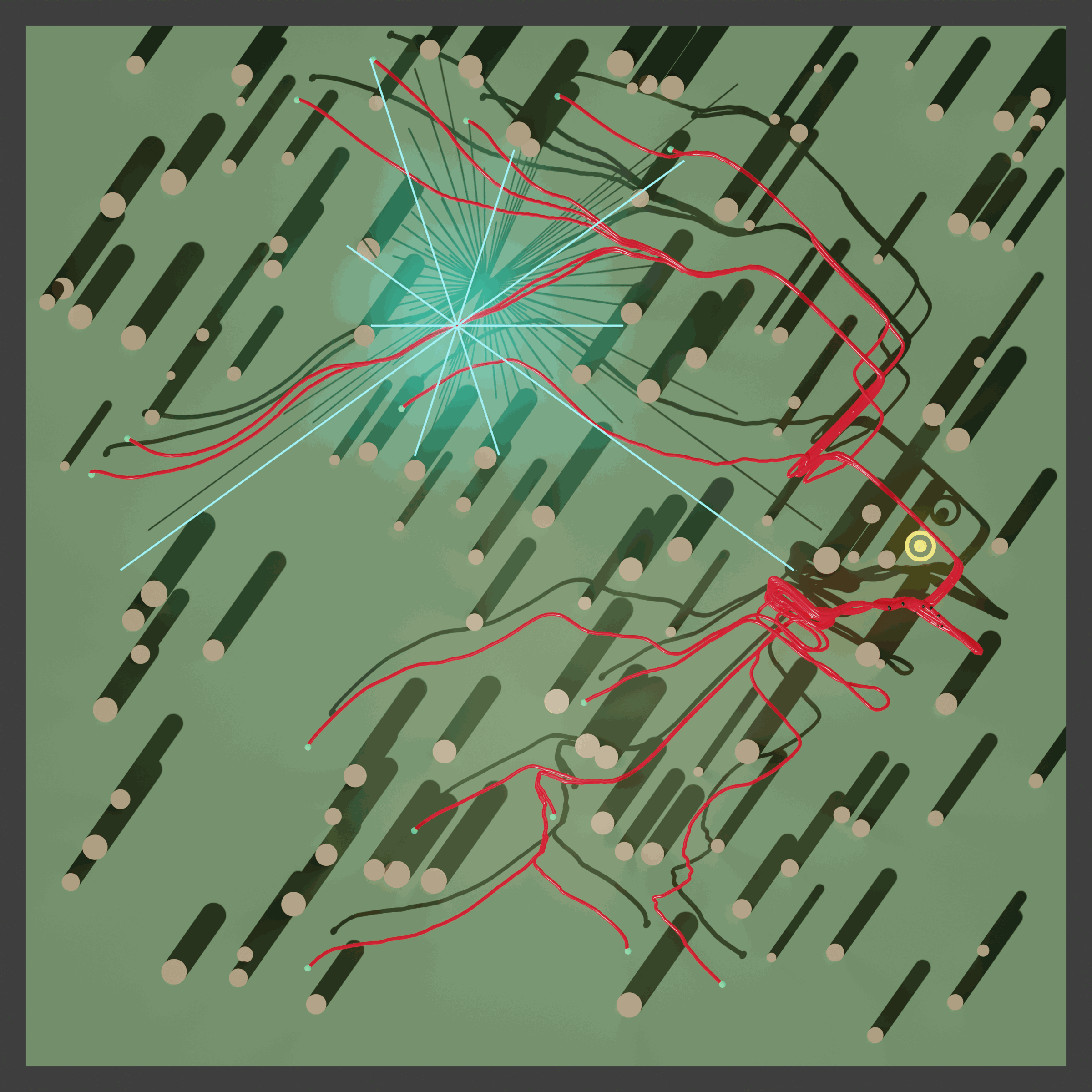}
\par\vspace{-0.45em}
{\scriptsize\makebox[0.49\linewidth][c]{(a)}\hfill\makebox[0.49\linewidth][c]{(b)}}
\par\vspace{0.05em}
\includegraphics[width=0.49\linewidth]{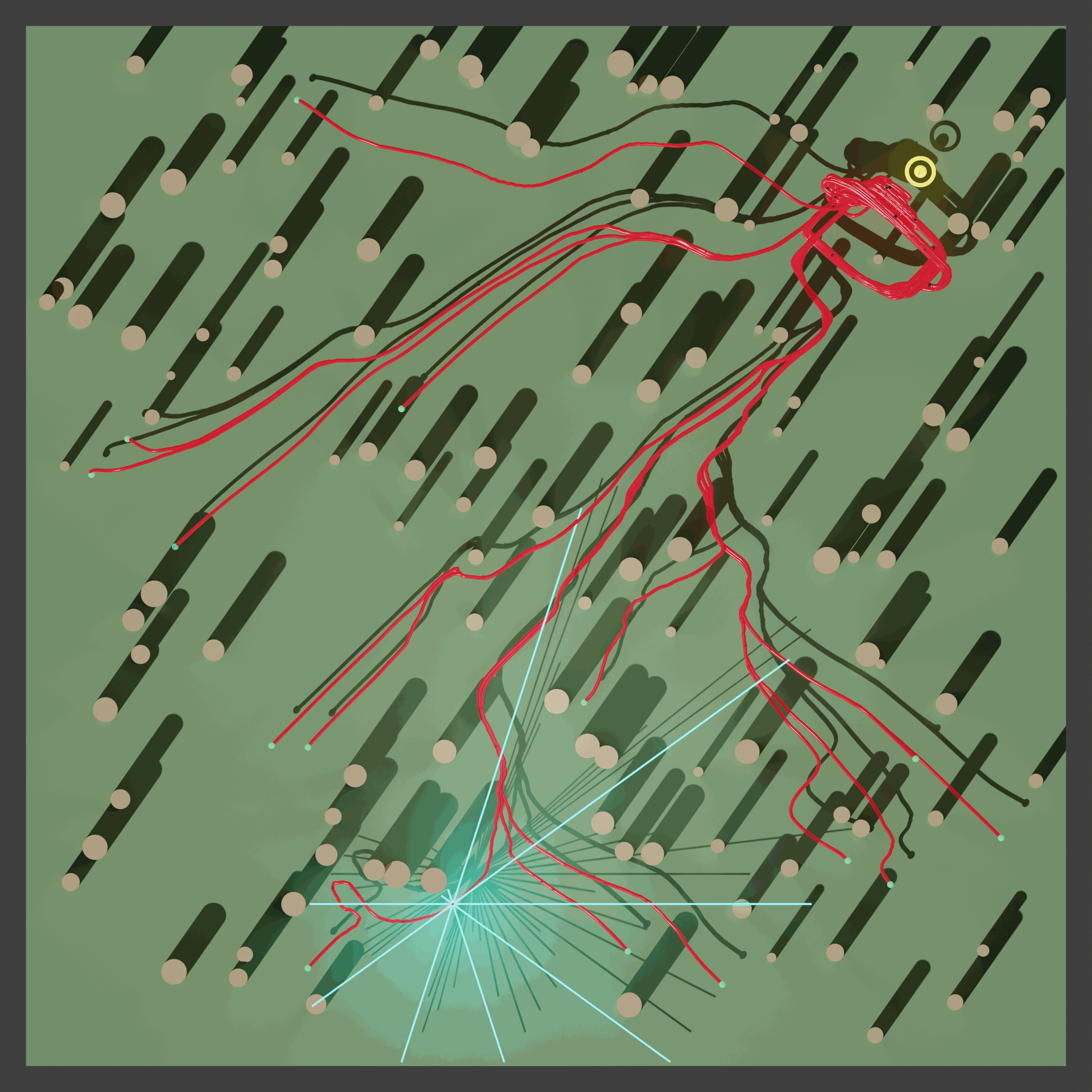}\hfill
\includegraphics[width=0.49\linewidth]{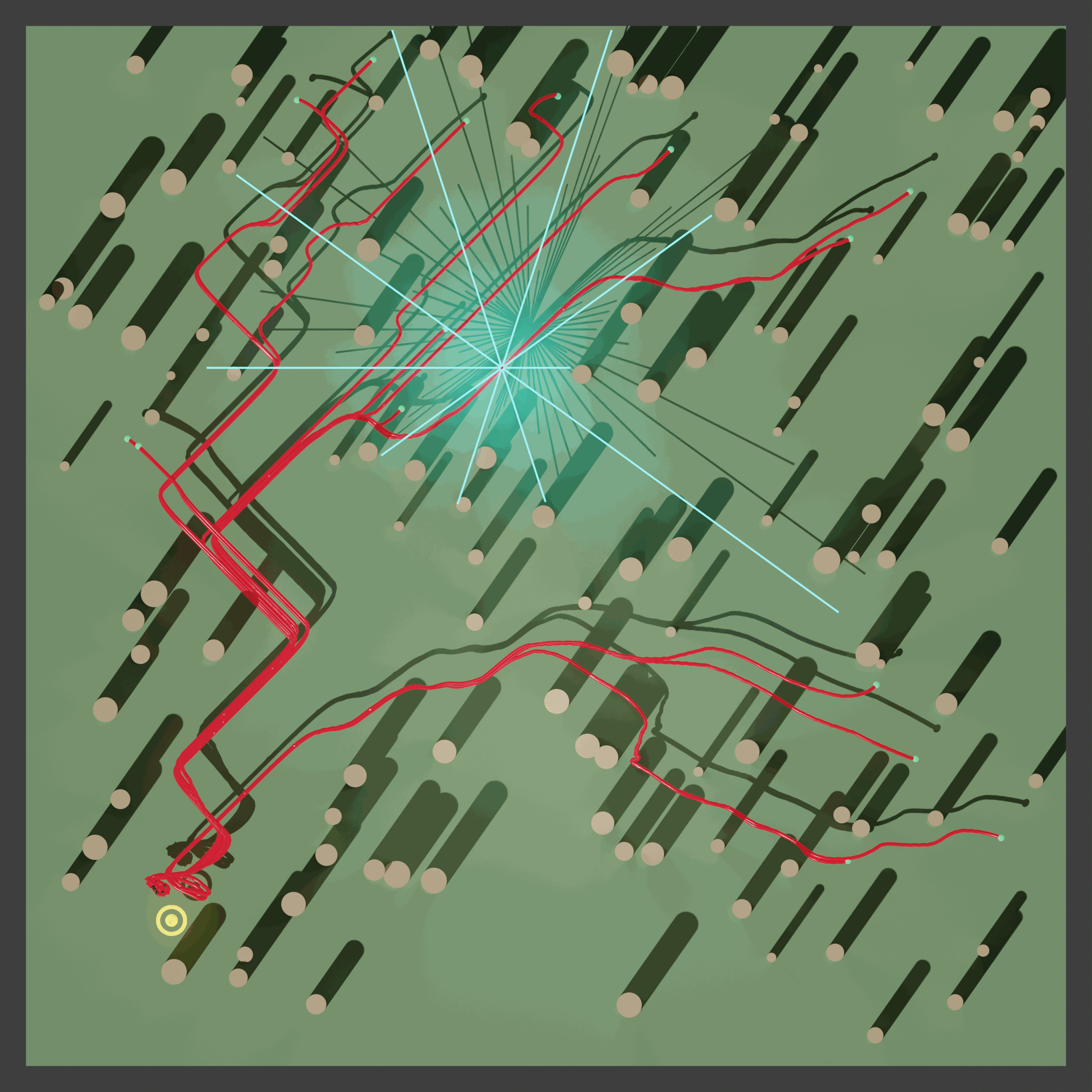}
\par\vspace{-0.45em}
{\scriptsize\makebox[0.49\linewidth][c]{(c)}\hfill\makebox[0.49\linewidth][c]{(d)}}
\end{minipage}\hfill
\begin{minipage}[t]{0.325\textwidth}
\centering
\footnotesize\textbf{\strut DreamerV3}\par\vspace{0.15em}
\includegraphics[width=0.49\linewidth]{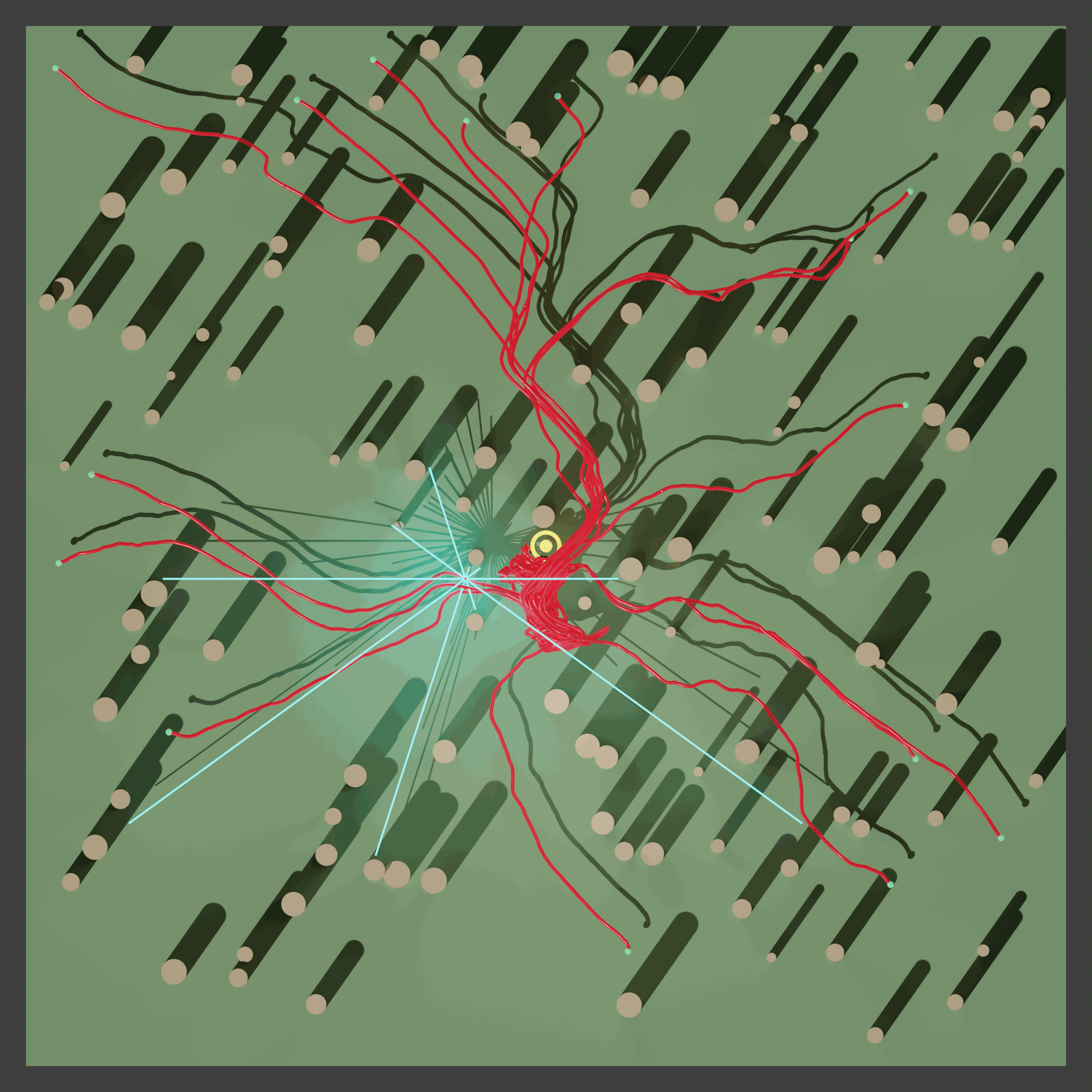}\hfill
\includegraphics[width=0.49\linewidth]{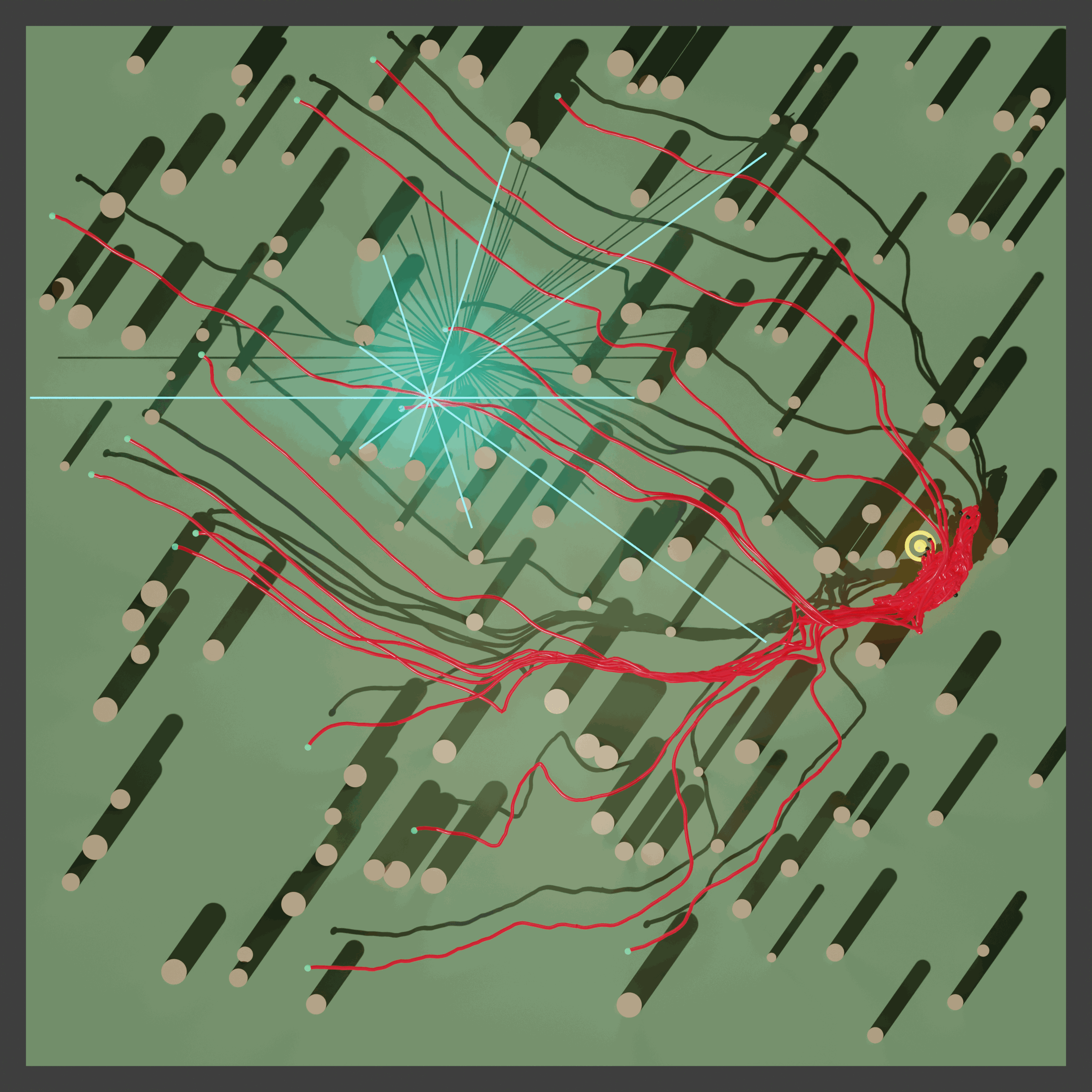}
\par\vspace{-0.45em}
{\scriptsize\makebox[0.49\linewidth][c]{(a)}\hfill\makebox[0.49\linewidth][c]{(b)}}
\par\vspace{0.05em}
\includegraphics[width=0.49\linewidth]{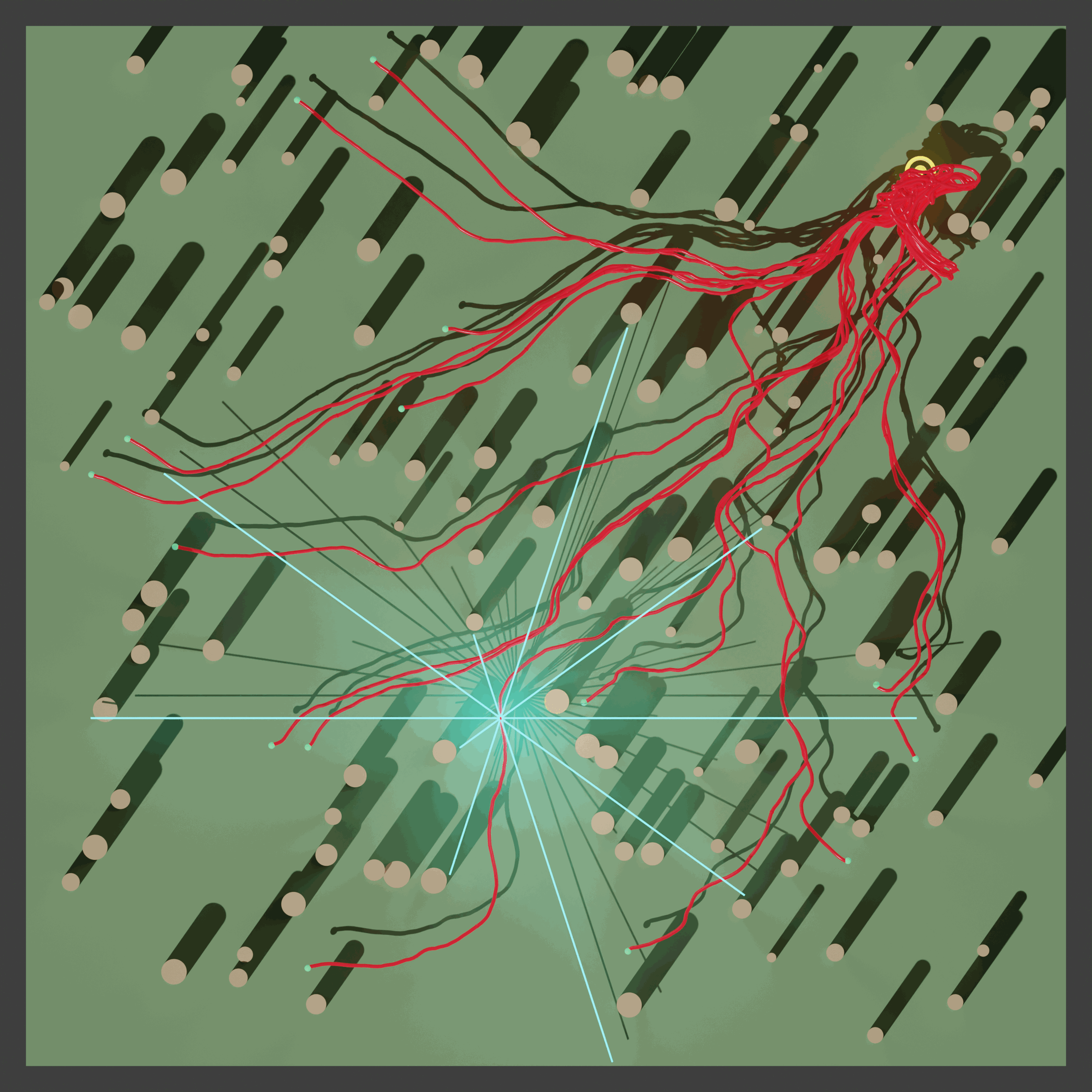}\hfill
\includegraphics[width=0.49\linewidth]{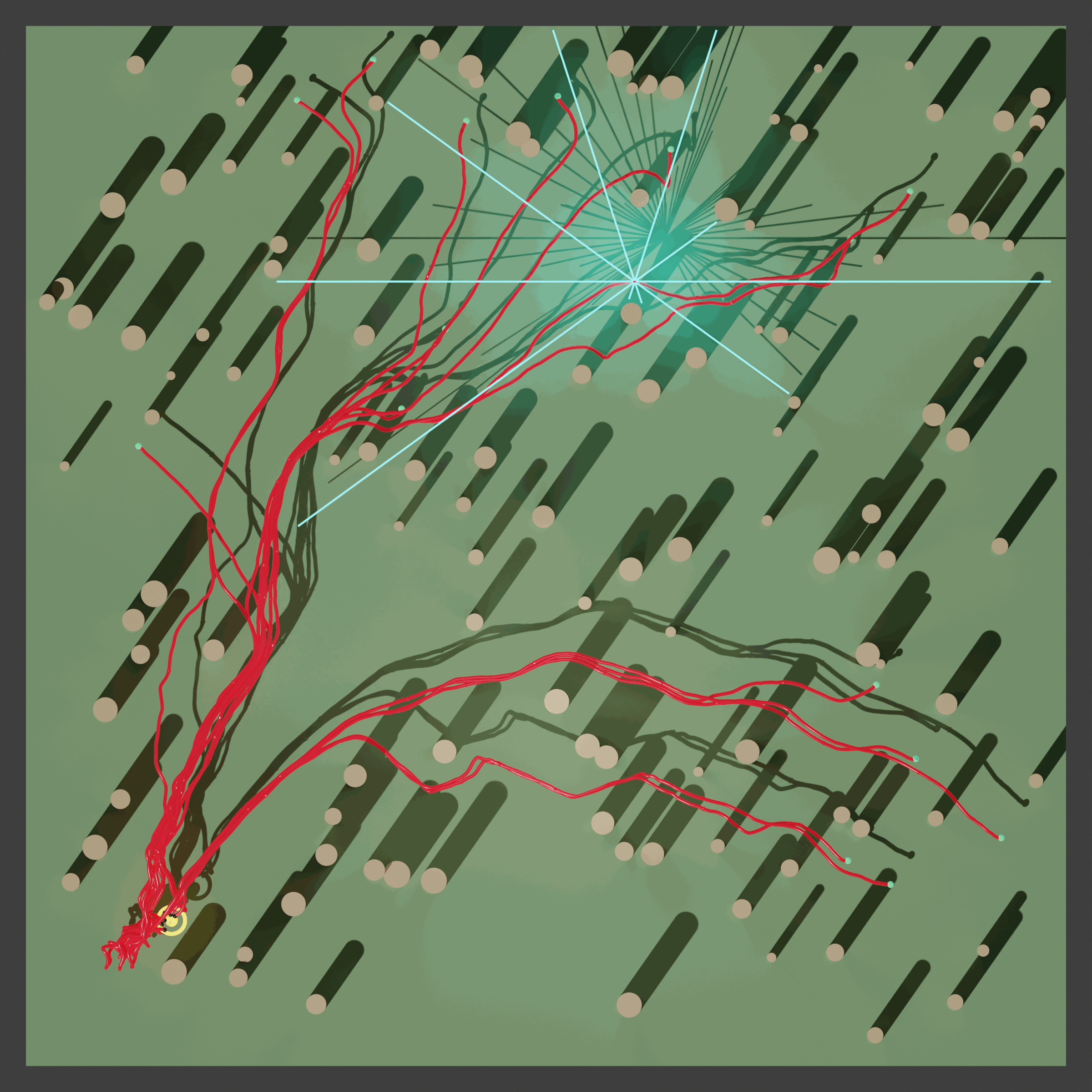}
\par\vspace{-0.45em}
{\scriptsize\makebox[0.49\linewidth][c]{(c)}\hfill\makebox[0.49\linewidth][c]{(d)}}
\end{minipage}\hfill
\begin{minipage}[t]{0.325\textwidth}
\centering
\footnotesize\textbf{\strut Koopman Dreamer}\par\vspace{0.15em}
\includegraphics[width=0.49\linewidth]{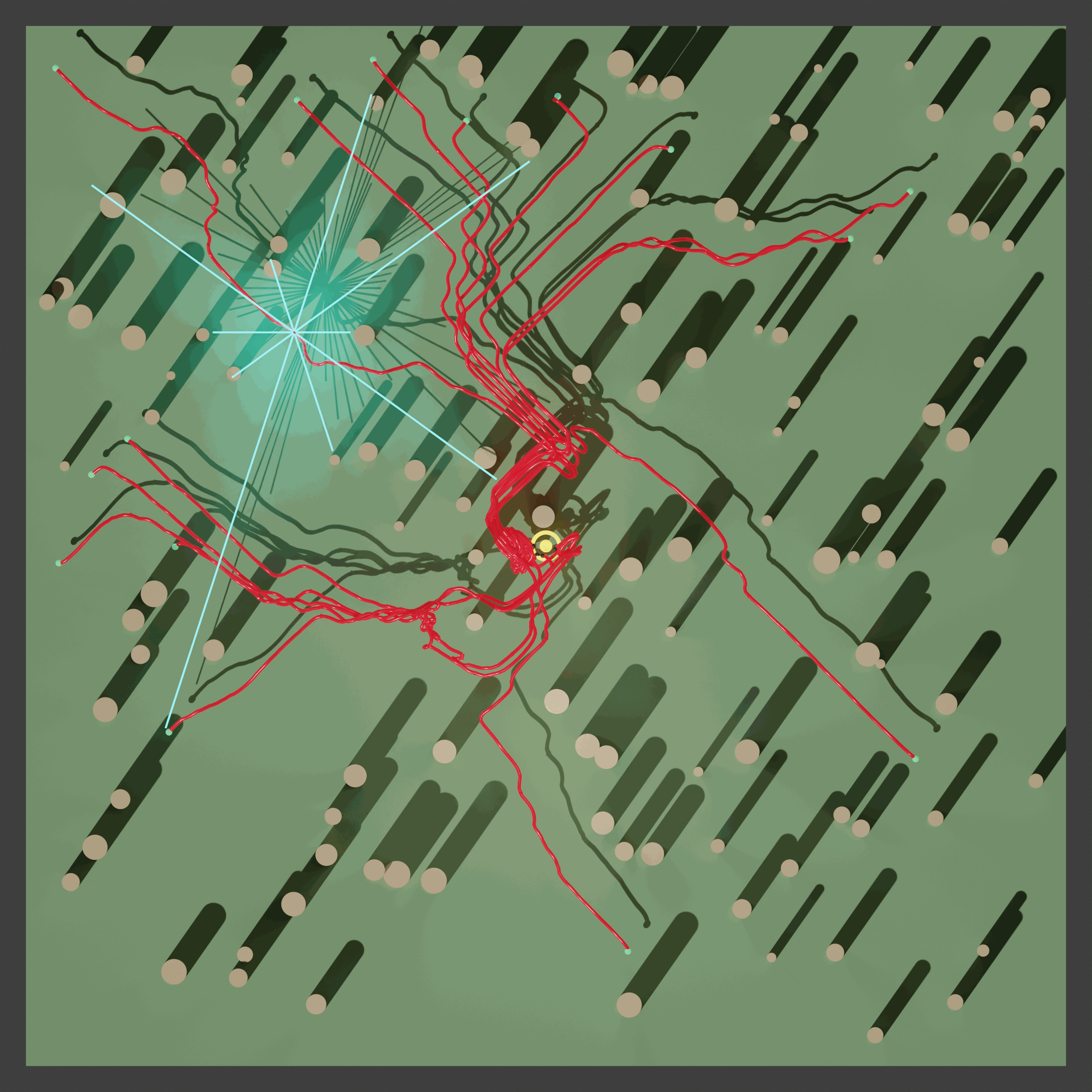}\hfill
\includegraphics[width=0.49\linewidth]{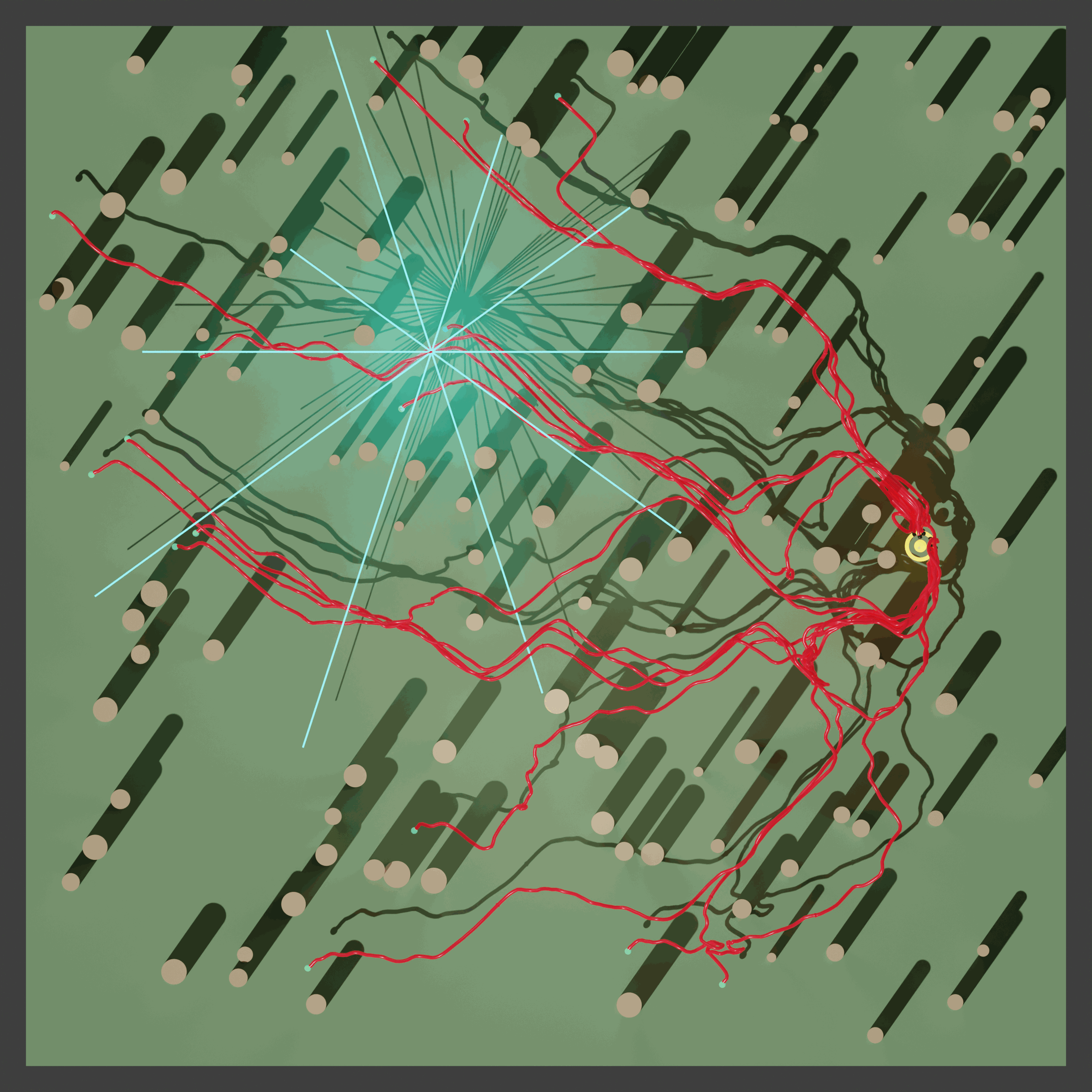}
\par\vspace{-0.45em}
{\scriptsize\makebox[0.49\linewidth][c]{(a)}\hfill\makebox[0.49\linewidth][c]{(b)}}
\par\vspace{0.05em}
\includegraphics[width=0.49\linewidth]{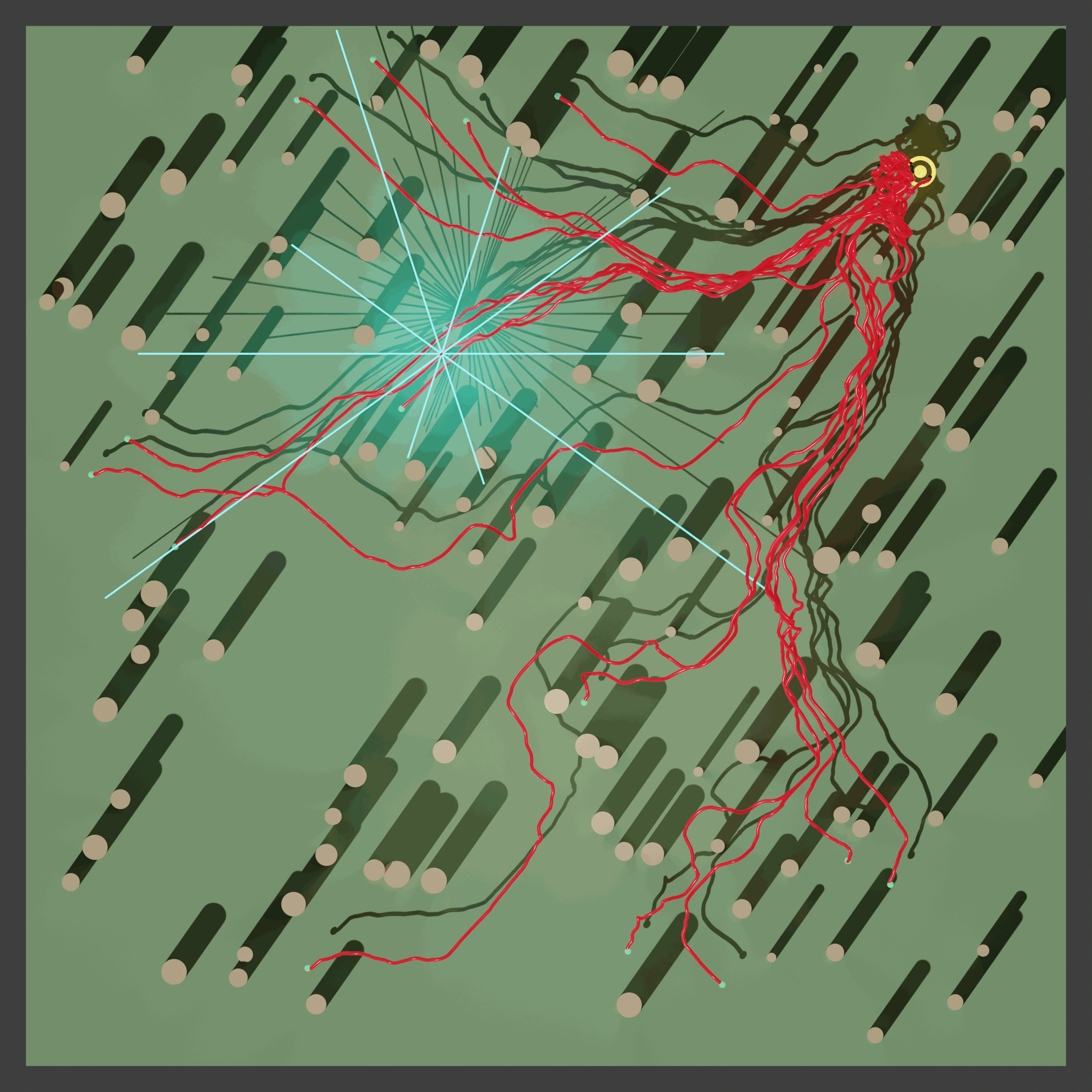}\hfill
\includegraphics[width=0.49\linewidth]{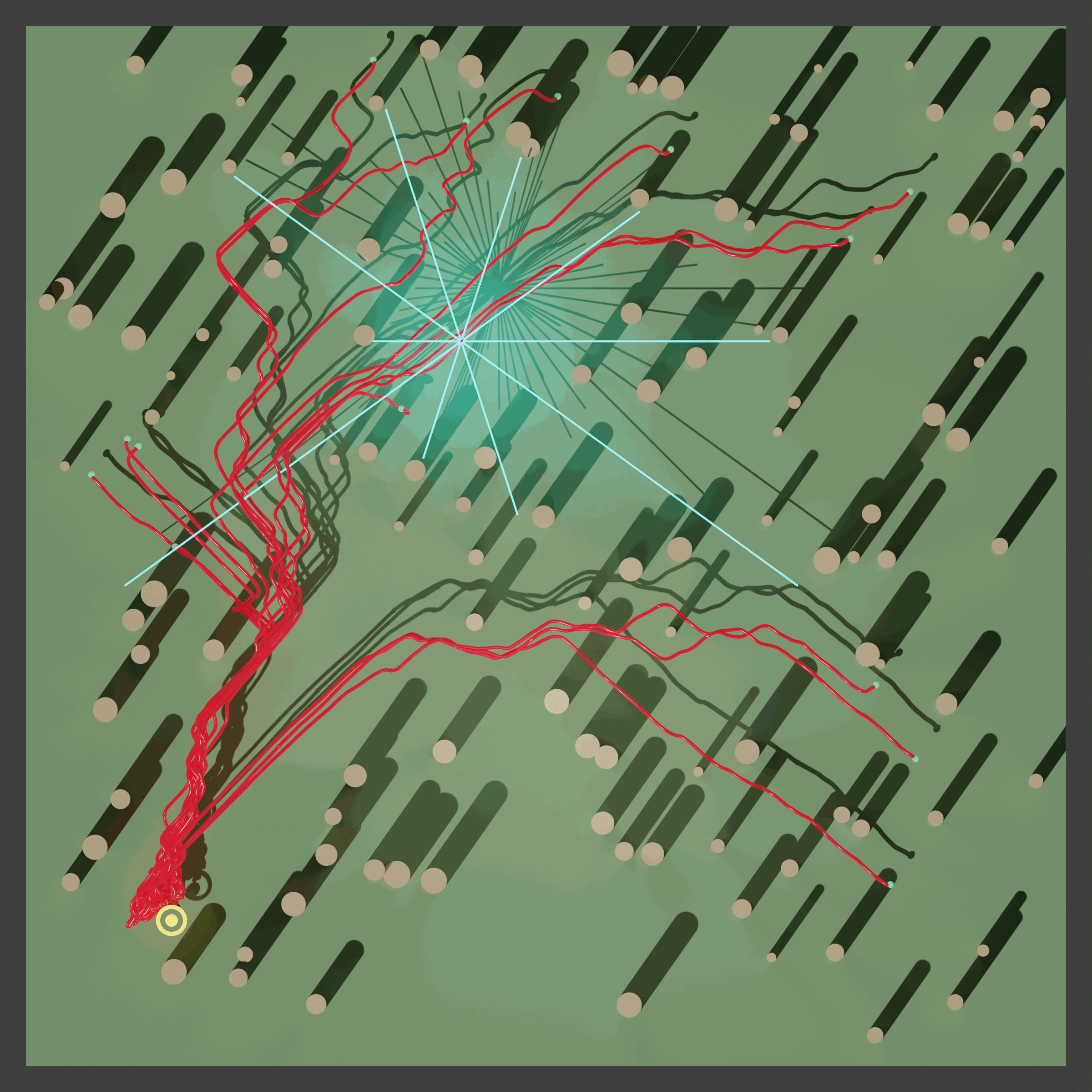}
\par\vspace{-0.45em}
{\scriptsize\makebox[0.49\linewidth][c]{(c)}\hfill\makebox[0.49\linewidth][c]{(d)}}
\end{minipage}
\caption{Top-down trajectories for four Forest targets: (a) \((0,0,2)\), (b) \((18,0,2)\), (c) \((18,18,2)\), and (d) \((-18,-18,2)\). Each panel shows the 15 highest-return trajectories in one representative shared scene.}
\label{fig:uav_forest_trajectory}
\end{figure*}

The aggregate results in Fig.~\ref{fig:uav_forest_summary} use all 240 episodes. Koopman Dreamer attains the highest success rate (73.8\%) and lowest failure rate (26.2\%), compared with 53.8\%/46.2\% for DreamerV3 and 15.8\%/84.2\% for D4PG. Its median final distance is also lowest (0.72 versus 0.75 and 4.22). The two world models have comparable mean returns, while Koopman Dreamer achieves the higher median return (195.50 versus 182.19). Complete statistics are reported in Table~\ref{tab:appendix_uav_forest_eval}. Overall, the structured world model yields the clearest advantage in task completion, which depends on maintaining useful target and obstacle information over long imagined horizons.

\begin{figure*}[!t]
\centering
\includegraphics[width=0.97\textwidth]{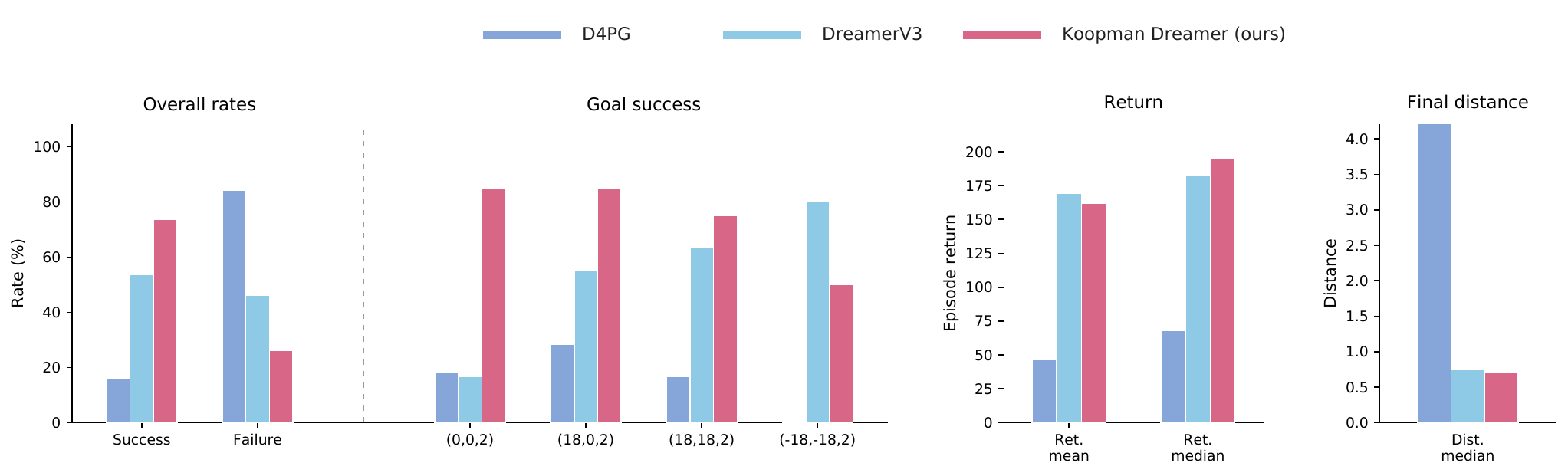}
\caption{Closed-loop UAV-LiDAR results over four targets and three shared Forest scenes.}
\label{fig:uav_forest_summary}
\end{figure*}

\textbf{Open-loop prediction.}
DreamerV3 and Koopman Dreamer are initialized from posterior context and rolled out along recorded actions without future observations. At horizon 64, Koopman Dreamer lowers vector, LiDAR, velocity, and deterministic-latent MSE from 0.0181, 0.0183, 0.00297, and 0.0465 to 0.0160, 0.0161, 0.00174, and 0.0233, respectively. It also reduces mean reward MSE over the rollout from 478.31 to 281.74 and improves target-relative prediction over most evaluated horizons.

Fig.~\ref{fig:uav_openloop} shows the strongest and most persistent reductions in velocity, reward, and deterministic-state drift---quantities that directly affect imagined returns and future control. The spectral blocks limit autonomous amplification, while the linear and bilinear action terms preserve global and state-dependent control effects. The agreement between these open-loop improvements and the higher completion rate provides complementary evidence that the structured transition produces more useful imagined trajectories.

\begin{figure*}[!t]
\centering
\includegraphics[width=0.92\textwidth]{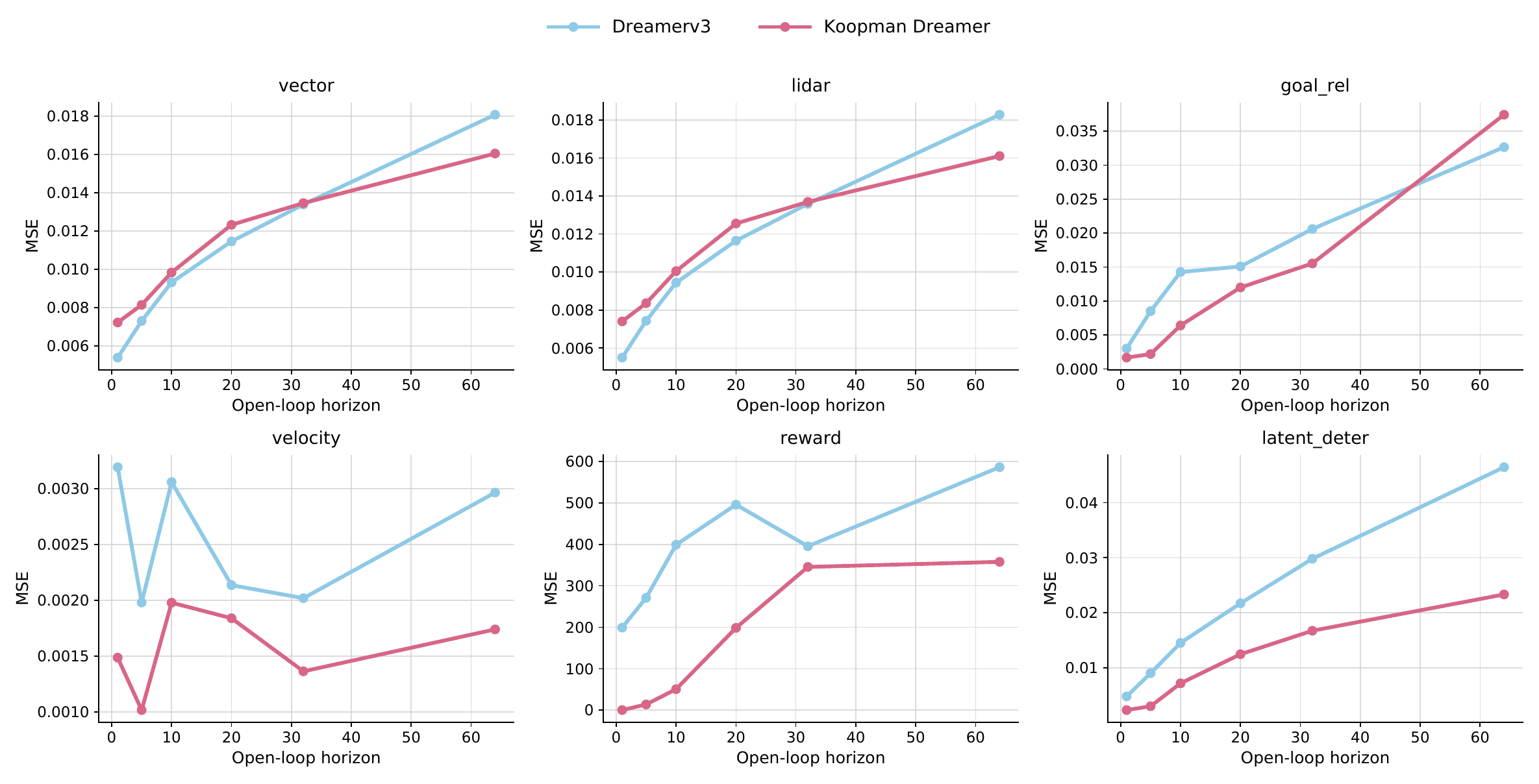}
\caption{Open-loop prediction MSE versus horizon in the UAV-LiDAR Forest scenario.}
\label{fig:uav_openloop}
\end{figure*}

\subsection{Spectral Radius Sensitivity}
\label{c.-spectral-radius-and-long-horizon-prediction-error}

We train six otherwise matched variants with \(\rho_{\min}=0.75\) and different spectral upper bounds. Table~\ref{tab:learned_spectral_radius_config} reports the resulting learned radii. After a 32-step context, each model performs a 64-step open-loop rollout along recorded actions; complete horizon curves are provided in Fig.~\ref{fig:learned_radius} in the appendix.

\begin{table}[!t]
\caption{Spectral Upper-Bound Configurations and Learned Spectral Radii}
\label{tab:learned_spectral_radius_config}
\centering
\footnotesize
\setlength{\tabcolsep}{5.2pt}
\renewcommand{\arraystretch}{1.12}
\begin{tabular}{@{}ccccccc@{}}
\toprule
\(\rho_{\max}\) & 0.90 & 0.95 & 1.00 & 1.05 & 1.10 & 1.20 \\
\(\rho(A_K)\) & 0.849 & 0.881 & 0.912 & 0.943 & 0.971 & 1.026 \\
\bottomrule
\end{tabular}
\renewcommand{\arraystretch}{1.0}
\end{table}

Fig.~\ref{fig:radius_sensitivity_h64} confirms the stability--expressiveness trade-off predicted by \eqref{eq:theory_multistep}. The smallest radius, \(0.849\), gives the lowest latent MSE (\(0.0140\)) but not the lowest observable errors, indicating that strong contraction can make latent trajectories numerically similar while discarding predictive information. Observable prediction is strongest at intermediate radii: \(0.881\) minimizes vector, LiDAR, and velocity MSE, whereas \(0.943\) minimizes goal-relative and reward MSE.

As the learned radius approaches and crosses the unit circle, accumulated error becomes dominant. Increasing \(\rho_\star\) from \(0.943\) to \(1.026\) raises horizon-64 latent MSE from \(0.0518\) to \(2.2675\), with corresponding degradation in vector and LiDAR prediction. Thus, the spectral constraint is not simply a contraction mechanism: it provides a learnable range that preserves persistent modes without allowing uncontrolled long-horizon amplification. Because latent coordinates differ across variants, observable metrics remain the primary basis for selecting this range.

\begin{figure*}[!t]
\centering
\includegraphics[width=0.98\textwidth]{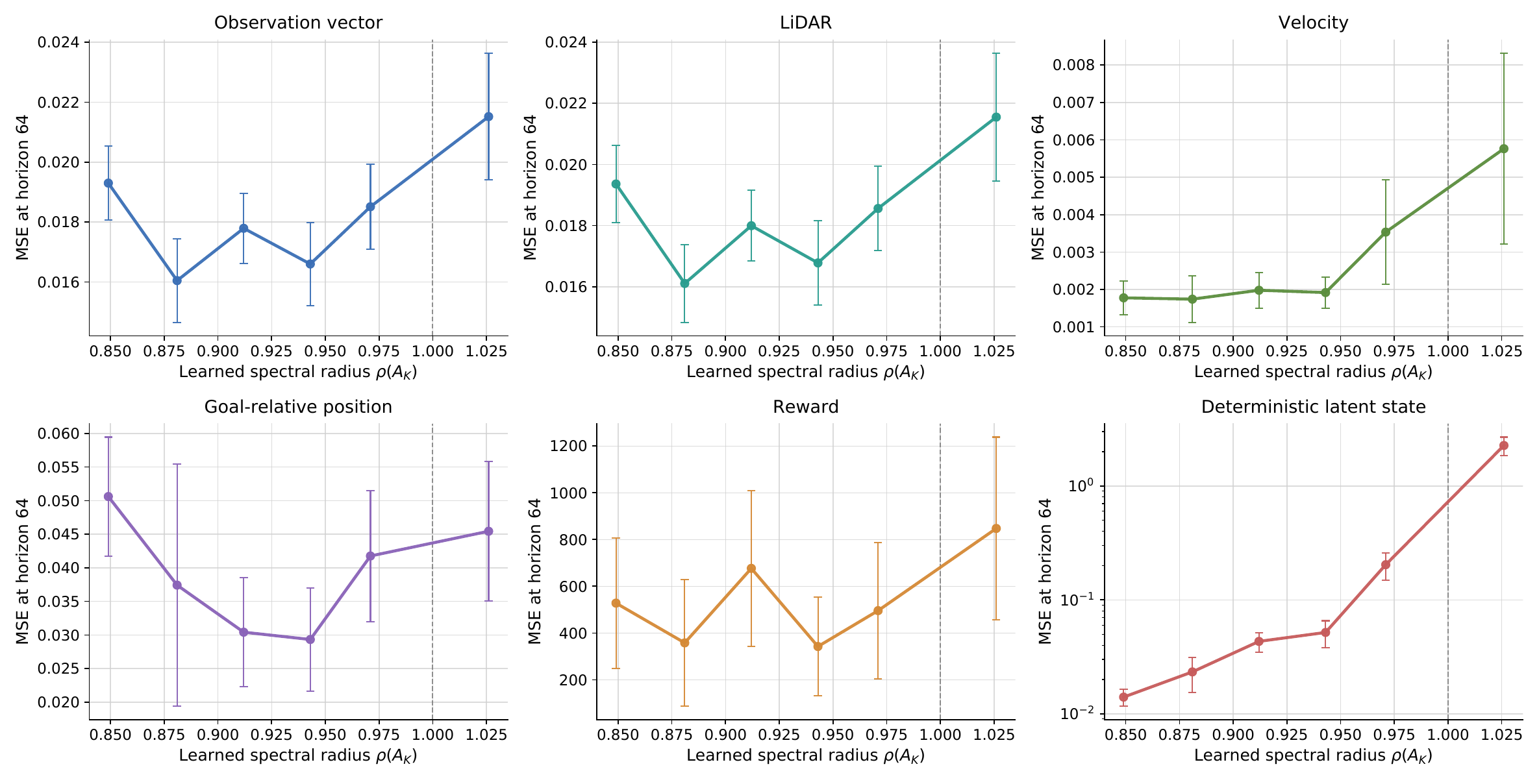}
\caption{Horizon-64 prediction MSE as a function of the learned spectral radius. Error bars show variation across evaluation batches.}
\label{fig:radius_sensitivity_h64}
\end{figure*}

\subsection{Structural Ablation}
\label{c.-ablation-study}

We evaluate the full model and three variants for 80 episodes on one shared Forest scene. Table~\ref{tab:uav_ablation_config} lists the exact modifications and their diagnostic roles. All other architecture, loss, training-budget, and evaluation settings are fixed.

\begin{table}[!t]
\caption{Structural Configurations of UAV-LiDAR Ablation Models}
\label{tab:uav_ablation_config}
\centering
\footnotesize
\setlength{\tabcolsep}{2.6pt}
\renewcommand{\arraystretch}{1.12}
\begin{tabular}{@{}>{\raggedright\arraybackslash}p{0.23\columnwidth}
                >{\raggedright\arraybackslash}p{0.43\columnwidth}
                >{\raggedright\arraybackslash}p{0.26\columnwidth}@{}}
\toprule
Variant & Modified component & Diagnostic focus \\
\midrule
Koopman Dreamer & Spectral blocks, posterior-conditioned EMA teacher, and bilinear control retained & Full structured model \\
w/o spectral & Bounded-radius parameterization disabled; modal radii optimized without a radius penalty; operator regularization retained & Long-horizon spectral control \\
w/o teacher & Teacher projection and EMA branches disabled; stop-gradient posterior \(\phi\) and online \(z\)-projection used as targets & Target stability and alignment \\
w/o bilinear & Bilinear term removed; only \(B_a\bar a_t\) retained & State-dependent action effects \\
\bottomrule
\end{tabular}
\renewcommand{\arraystretch}{1.0}
\end{table}

\textbf{Closed-loop effects.}
Fig.~\ref{fig:uav_ablation} assigns distinct roles to the three components. Removing the spectral constraint leaves success comparable but reduces mean return by 20.4\% (163.82 to 130.33) and raises median action variation from 0.97 to 4.00. The spectral structure therefore contributes primarily through rollout quality and smoother, less compensatory control. Removing the teacher lowers mean return to 143.20 and success to 68.8\%, while increasing the collision rate from 15.0\% to 28.7\%. This suggests that slowly updated posterior-conditioned targets improve closed-loop robustness by preventing rapidly changing online targets from weakening task-relevant transition learning. Removing bilinear control is most damaging: success falls to 8.8\% and the collision rate rises to 56.2\%, demonstrating that a fixed linear action direction is insufficient for state-dependent navigation dynamics. Complete termination statistics are reported in Table~\ref{tab:appendix_uav_ablation_eval}.

\begin{figure*}[!t]
\centering
\includegraphics[width=0.98\textwidth]{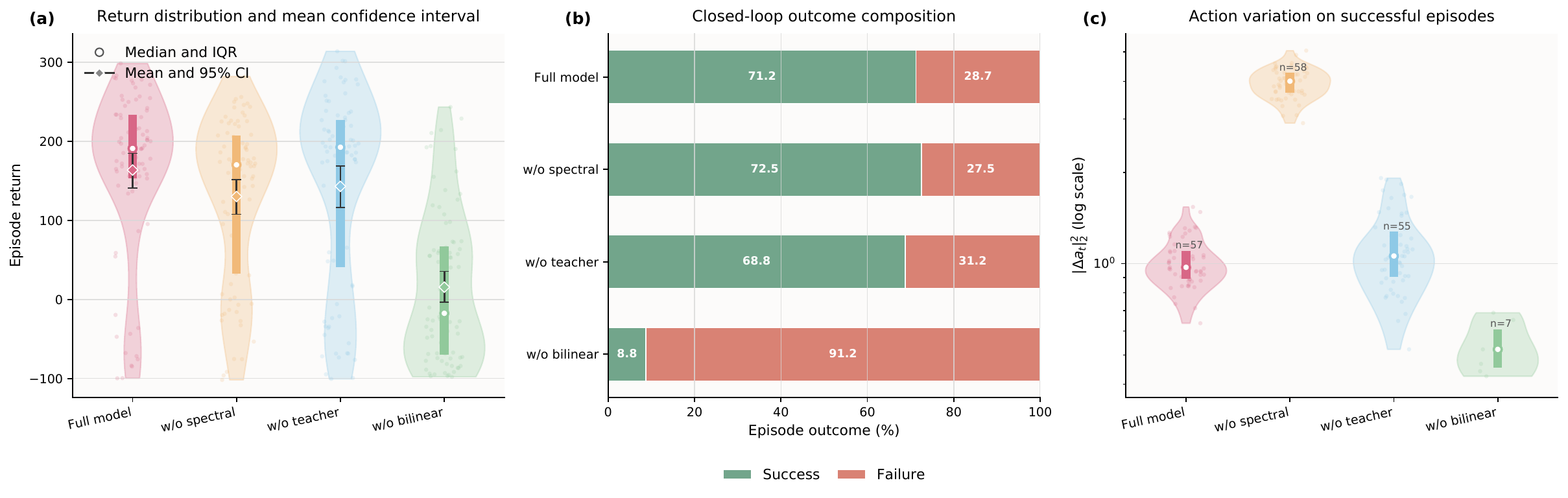}
\caption{Closed-loop ablation results: return distribution, outcome composition, and action variation on successful episodes.}
\label{fig:uav_ablation}
\end{figure*}

\textbf{Open-loop effects.}
Fig.~\ref{fig:uav_ablation_openloop} explains the closed-loop differences at the dynamics level. At horizon 64, removing the spectral constraint increases vector, LiDAR, velocity, and latent MSE from \(0.0160\), \(0.0161\), \(0.00174\), and \(0.0233\) to \(0.0228\), \(0.0229\), \(0.00690\), and \(0.3891\), respectively. This broad error growth directly supports the spectral backbone's role in suppressing long-horizon drift. Removing bilinear control increases goal-relative and reward MSE from \(0.0374\) and \(357.81\) to \(0.0674\) and \(1036.99\), connecting its severe control degradation to inaccurate state-dependent action propagation. The teacher ablation retains similar average observation error but has higher reward error and a markedly higher collision rate, indicating that stable teacher targets contribute information relevant to robust closed-loop decisions beyond average observation reconstruction.

\begin{figure*}[!t]
\centering
\includegraphics[width=0.96\textwidth]{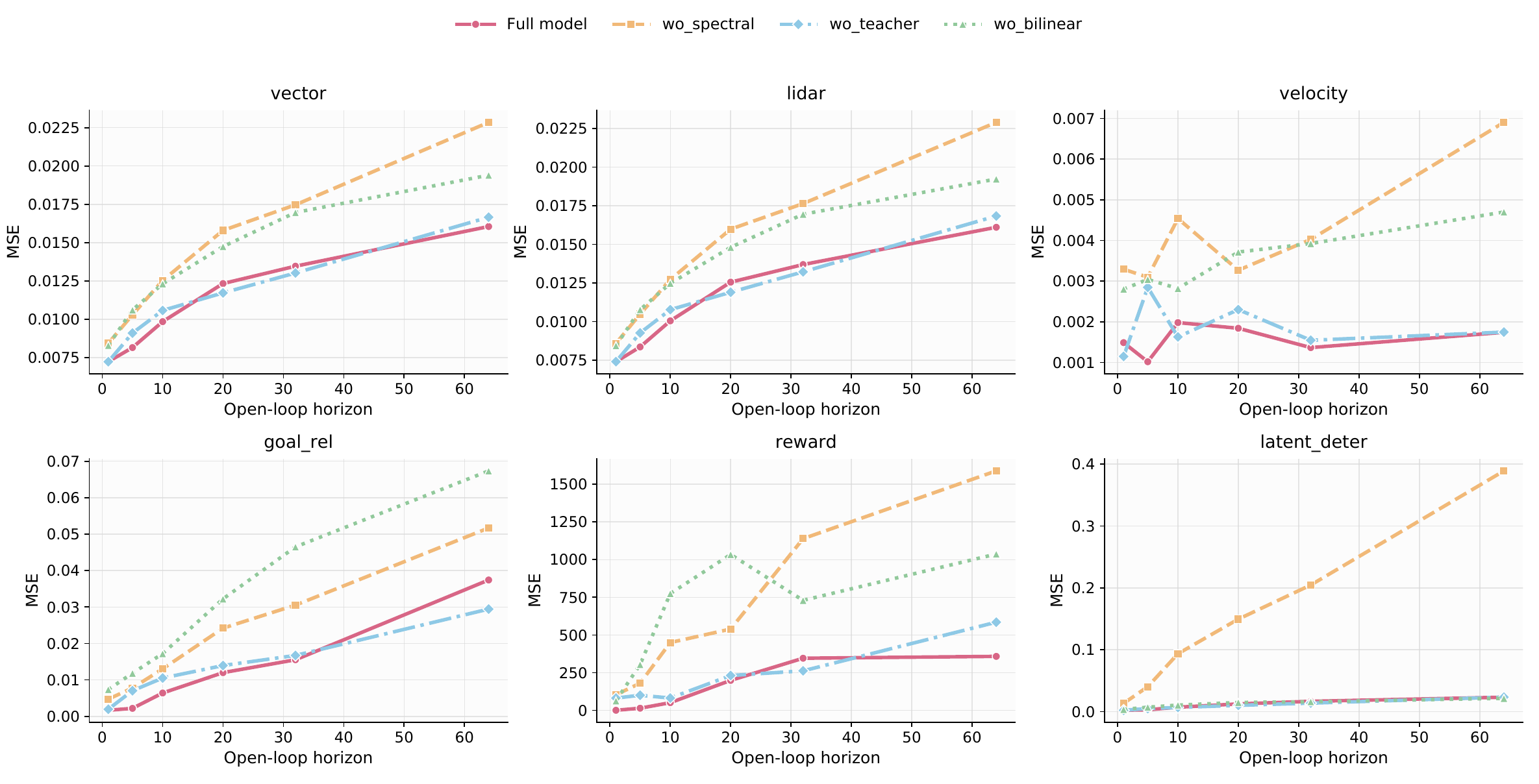}
\caption{Open-loop prediction errors of the ablation models versus horizon. All variants use a 32-step context and 64-step prediction horizon.}
\label{fig:uav_ablation_openloop}
\end{figure*}

Taken together, the closed- and open-loop results support a complementary structural hierarchy. The spectral constraint limits long-horizon drift and the compensatory action variation it induces; posterior-conditioned EMA targets improve representation--transition alignment and closed-loop robustness; and bilinear control captures the state-dependent action effects required for nonlinear navigation. These components jointly turn the spectrally organized latent backbone into an effective controlled world model for long-horizon imagination. The ablations are conducted in a representative Forest setting, and broader scene-level evaluation can further examine the generality of these roles.
\section{Discussion}\label{vi.-discussion}
The experimental results support the central claim that organizing deterministic latent propagation around a spectrally constrained Koopman backbone can improve long-horizon imagination in Dreamer-style world models. On the nine DMC proprioceptive tasks, Koopman Dreamer outperforms the DreamerV3 world-model baseline on eight tasks and achieves the best final score among the evaluated methods on six. The open-loop results provide complementary evidence: in addition to substantially reducing deterministic-latent rollout error, Koopman Dreamer lowers decoded proprioceptive prediction error on most tasks. This consistency between open-loop prediction and closed-loop control suggests that the improvements on swing-up, sustained motion, precise reaching, and balance are related to more reliable multi-step latent propagation rather than to task-specific capacity alone.

The spectral-radius sensitivity results further clarify the role of the proposed structure. Strong contraction can suppress persistent task-relevant information, whereas overly persistent modes increase the influence of accumulated transition errors. The favorable intermediate spectral range therefore reflects a stability--expressiveness trade-off rather than a preference for the smallest possible radius. Within this structure, the spectral backbone organizes the persistence, attenuation, and oscillation of autonomous latent modes, while posterior-conditioned teacher targets and rollout objectives reduce stepwise prediction residuals. The linear and bilinear action terms further adapt the backbone to controlled dynamics by representing both globally consistent and state-dependent action effects. The ablation results support these complementary roles and indicate that the auxiliary mechanisms strengthen, rather than replace, the spectral dynamics core.

The results span compact proprioceptive observations and higher-dimensional LiDAR-based vectors. In UAV-LiDAR navigation, Koopman Dreamer improves target completion and substantially reduces the overall failure rate relative to DreamerV3. Overall, the experiments support spectrally structured latent dynamics as an effective inductive bias for long-horizon prediction and control in the evaluated settings. Future work may extend the approach with adaptive spectral ranges, broader evaluation across training runs and environments, and safety-aware or risk-sensitive policy objectives.
\section{Conclusion}\label{vii.-conclusion}
This paper proposed Koopman Dreamer for end-to-end continuous control. Bounded rotation--scaling blocks explicitly characterize the damping, persistence, and oscillation of autonomous latent modes, while linear and bilinear action terms and stochastic modulation adapt this spectral backbone to controlled nonlinear dynamics. Posterior-conditioned EMA targets and prior-rollout objectives train the same structured transition under both observation-corrected representation learning and posterior-free imagination. The resulting error-propagation analysis shows that reliable long-horizon prediction requires a balance between error attenuation and persistent information retention rather than maximal contraction. Experiments on DMC proprioceptive control and UAV-LiDAR navigation, together with open-loop prediction, spectral-radius sensitivity analysis, and structural ablations, demonstrate that the structure of Koopman Dreamer improves long-horizon latent propagation and supports stronger closed-loop task performance. Overall, Koopman Dreamer provides a structured and controllable dynamics foundation for long-horizon imagination in Dreamer-style world models. Future work may investigate adaptive spectral ranges and combine Koopman Dreamer with safety-aware or risk-sensitive policy objectives for more reliable autonomous control.


%
\clearpage
\onecolumn
\appendix[Experimental Details and Supplementary Results]
\label{appendix-a.-experimental-configuration-and-supplementary-material}
\subsection{DMC Proprioceptive Benchmark Details}
The PPO, D4PG, DDPG, MPO, DMPO, and DreamerV3 curves used to compute Table~\ref{tab:dmc_last50} are taken from the public DeepMind Control Suite proprioceptive benchmark reported in the DreamerV3 paper \cite{hafner2025dreamerv3}. This benchmark uses continuous actions, proprioceptive vector inputs, and a budget of 500K environment steps. Control Suite tasks use an action repeat of 2. For consistency with the main evaluation protocol in this paper, the reported final score of each baseline seed is computed from the last recorded point in the corresponding public JSON curve, and the table reports the mean over seeds. The DreamerV3 paper reports benchmark results under fixed hyperparameter settings and provides a unified training protocol in its supplementary material. This paper does not repeat the internal hyperparameters of these public baselines and uses them only as published comparison results. Koopman Dreamer is implemented and trained in this paper, with its main configuration shown in Table~\ref{tab:appendix_dmc_config}.
\begin{table}[!t]
\caption{Main Hyperparameters of Koopman Dreamer on DMC Proprioceptive Tasks}
\label{tab:appendix_dmc_config}
\centering
\footnotesize
\setlength{\tabcolsep}{4pt}

\begin{tabular}{@{}p{0.18\textwidth}p{0.26\textwidth}p{0.47\textwidth}@{}}
\toprule
Category & Hyperparameter & Value \\
\midrule
Environment and training & Task set & 9 DMC proprioceptive tasks, env.dmc.image=False \\
Environment and training & Interaction budget & 500K environment steps \\
Environment and training & Batch and updates & batch size 16, batch length 64, train ratio 512 \\
World-model scale & preset & size12m \\
World-model scale & Koopman hidden & deterministic state 2048, hidden size 256, stochastic 32 groups \(\times\) 16 classes \\
World-model scale & Koopman representation & \(\phi_t\) dimension 2048, \(z_t\) projection dimension 64 \\
Network width & encoder / decoder / heads & depth 16, units 256 \\
Policy optimization & actor--critic imagination & imagination length 15, discount horizon 333, return lambda 0.95 \\
Koopman backbone & Spectral constraint & maximum radius 0.95, minimum radius 0.85, initial radius 0.90 \\
Koopman backbone & Control and stochastic modulation & \(z\) scale 0.08, dense scale 0.0, teacher EMA rate 0.01 \\
Auxiliary losses & loss scales & koop 0.02, roll 0.02, opreg 0.002, pred 0.05 \\
Auxiliary losses & warmup & pred 20K, roll 50K, koop 100K, opreg 150K \\
\bottomrule
\end{tabular}
\end{table}
\begin{table}[!t]
\caption{Sources of Public DMC Proprioceptive Comparison Results}
\label{tab:appendix_acme_config}
\centering
\footnotesize
\setlength{\tabcolsep}{4pt}

\begin{tabular}{@{}p{0.18\textwidth}p{0.38\textwidth}p{0.35\textwidth}@{}}
\toprule
Method & Source and protocol & Description \\
\midrule
PPO & Public Proprio Control benchmark from the DreamerV3 paper & PPO reference with fixed hyperparameters \\
D4PG / DDPG / MPO / DMPO & Public Proprio Control benchmark from the DreamerV3 paper & Public continuous-control comparison results \\
DreamerV3 & Public Proprio Control benchmark from the DreamerV3 paper & Public world-model baseline result \\
Koopman Dreamer & Implemented and trained in this paper & Uses the configuration in Table~\ref{tab:appendix_dmc_config} \\
\bottomrule
\end{tabular}
\end{table}

\subsection{UAV-LiDAR Configuration and Complete Tables}
The UAV experiments use gym\_UAVLidarNav-v0, with observation key vector and action key action. The main closed-loop navigation evaluation, trajectory visualization, open-loop comparison between DreamerV3 and Koopman Dreamer, spectral-radius sensitivity analysis, and structural ablations are all conducted in the Forest scenario, using fixed targets and shared initial conditions as specified in the corresponding experiment. D4PG, DreamerV3, and Koopman Dreamer use the same Forest environment and evaluation protocol for the main closed-loop comparison. Table~\ref{tab:uav_scenes} summarizes the UAV-LiDAR scenario variants, Table~\ref{tab:appendix_uav_config} gives the Forest environment and evaluation configuration used in the main experiments, and Table~\ref{tab:appendix_uav_method_config} lists the method hyperparameters.
\begin{table}[!t]
\caption{UAV-LiDAR Forest Autonomous Navigation Environment and Evaluation Configuration}
\label{tab:appendix_uav_config}
\centering
\footnotesize
\setlength{\tabcolsep}{4pt}

\begin{tabular}{@{}p{0.18\textwidth}p{0.26\textwidth}p{0.47\textwidth}@{}}
\toprule
Category & Configuration item & Value \\
\midrule
Environment & Gymnasium task & gym\_UAVLidarNav-v0, observation key vector, action key action \\
Environment & Forest autonomous navigation environment & world size 50 m, 120 cylindrical obstacles, dynamic obstacles disabled \\
Environment & LiDAR & 280 dimensions, 7 rings, max range 20 m, proximity encoding, clearance feature enabled \\
Environment & Dynamics and episode & time step 0.1 s, maximum horizontal speed 2.0 m/s, maximum vertical speed 1.0 m/s, max steps 500 \\
Evaluation & Fixed targets & \((0,0,2)\), \((18,0,2)\), \((18,18,2)\), \((-18,-18,2)\) \\
Evaluation & Main-comparison scenes & 3 shared forest scenes \\
Evaluation & Main-comparison episodes & 20 per target per scene, 80 per scene, 240 per method in total \\
Evaluation & Ablation episodes & 1 representative shared scene, 20 per target, 80 per variant in total \\
\bottomrule
\end{tabular}
\end{table}

\begin{table}[!t]
\caption{Main Hyperparameters of Methods on UAV-LiDAR Forest Autonomous Navigation}
\label{tab:appendix_uav_method_config}
\centering
\footnotesize
\setlength{\tabcolsep}{4pt}

\begin{tabular}{@{}p{0.15\textwidth}p{0.20\textwidth}p{0.56\textwidth}@{}}
\toprule
Method & Category & Value \\
\midrule
D4PG & Training and sampling & 500K agent decision steps, action repeat 1, local actors 16, evaluation episodes 10 \\
D4PG & actor / critic & actor MLP \(256,256,256\), critic MLP \(256,256,256\), distributional critic atoms 51 \\
D4PG & replay and batch & batch size 256, min replay size 1000, max replay size \(10^6\), samples per insert 0 \\
D4PG & optimization and target & learning rate \(3\times 10^{-4}\), Gaussian exploration standard deviation 0.2, discount 0.99, 5-step return, critic support \([-150,150]\) \\
DreamerV3 & Training and sampling & 500K environment steps, 24 parallel environments, evaluation environments 4, evaluation episodes 4, train ratio 128 \\
DreamerV3 & World-model scale & size25m, deterministic state 3072, hidden size 384, stochastic 32 groups \(\times\) 24 classes \\
DreamerV3 & Batch and policy optimization & batch size 16, batch length 64, imagination length 15, return lambda 0.95 \\
Koopman Dreamer & Training and sampling & 500K environment steps, 24 parallel environments, evaluation environments 4, evaluation episodes 4, train ratio 128 \\
Koopman Dreamer & World-model scale & size25m, \(\phi_t\) dimension 3072, hidden size 384, stochastic 32 groups \(\times\) 24 classes, \(z_t\) projection dimension 96 \\
Koopman Dreamer & Spectral and control & maximum radius 0.95, minimum radius 0.75, initial radius 0.87, learned radius 0.881, control scale 3.0 \\
Koopman Dreamer & Bilinear control & bilinear scale 0.05, bilinear rank 256, bilinear outscale 0.01 \\
Koopman Dreamer & Stochastic modulation and teacher & \(z\) scale 0.10, dense scale 0.0, teacher EMA rate 0.01 \\
Koopman Dreamer & Auxiliary losses & koop 0.005, roll 0.005, opreg 0.001, pred 0.01; warmups are 150K, 100K, 200K, and 60K, respectively \\
\bottomrule
\end{tabular}
\end{table}

\begin{table}[!t]
\caption{Complete Numerical Results for the Four-Target UAV-LiDAR Forest Autonomous Navigation Evaluation Averaged over Three Shared Evaluation Scenes}
\label{tab:appendix_uav_forest_eval}
\centering
\scriptsize
\setlength{\tabcolsep}{4pt}

\begin{tabular}{@{}lrrrrrrrr@{}}
\toprule
Method & Episodes & Return mean & Return median & Success & Failure & Length mean & Path mean & Dist median \\
\midrule
D4PG & 240 & 46.52 & 67.81 & 15.8\% & 84.2\% & 386.00 & 72.88 & 4.22 \\
DreamerV3 & 240 & \textbf{169.21} & 182.19 & 53.8\% & 46.2\% & 359.76 & 60.09 & 0.75 \\
\shortstack[l]{Koopman\\Dreamer} & 240 & 161.98 & \textbf{195.50} & \textbf{73.8\%} & \textbf{26.2\%} & \textbf{194.70} & \textbf{40.45} & \textbf{0.72} \\
\bottomrule
\end{tabular}
\end{table}

\begin{table}[!t]
\caption{Complete Ablation Results on UAV-LiDAR Forest Autonomous Navigation}
\label{tab:appendix_uav_ablation_eval}
\centering
\scriptsize
\setlength{\tabcolsep}{4.5pt}
\begin{tabular*}{\textwidth}{@{\extracolsep{\fill}}lrrrrrr@{}}
\toprule
Variant & Episodes & Return mean [95\% CI] & Success & Collision & Timeout &
\shortstack{Successful-trajectory\\action variation\\median [IQR]} \\
\midrule
\shortstack[l]{Koopman\\Dreamer} & 80 & 163.82 [140.85, 184.88] & 57/80 (71.2\%) & 15.0\% & 13.8\% & 0.97 [0.89, 1.10] \\
w/o spectral & 80 & 130.33 [107.80, 151.85] & 58/80 (72.5\%) & 16.2\% & 11.2\% & 4.00 [3.67, 4.26] \\
w/o teacher & 80 & 143.20 [116.25, 169.06] & 55/80 (68.8\%) & 28.7\% & 2.5\% & 1.06 [0.90, 1.27] \\
w/o bilinear & 80 & 15.64 [-3.57, 35.78] & 7/80 (8.8\%) & 56.2\% & 35.0\% & 0.52 [0.45, 0.61] \\
\bottomrule
\end{tabular*}
\par\vspace{0.35em}
\parbox{\textwidth}{\scriptsize CI denotes confidence interval, and IQR denotes interquartile range. Return CIs are obtained by stratified bootstrap resampling within each of the four target sets. Action variation is the temporal mean of the squared Euclidean difference between consecutive actions on successful trajectories.}
\end{table}
\subsection{Supplementary Spectral-Radius Horizon Curves}
Fig.~\ref{fig:learned_radius} complements the horizon-64 summary in Fig.~\ref{fig:radius_sensitivity_h64} by showing how prediction errors evolve across all evaluated open-loop horizons.
\begin{figure}[!t]
\centering
\includegraphics[width=0.98\textwidth]{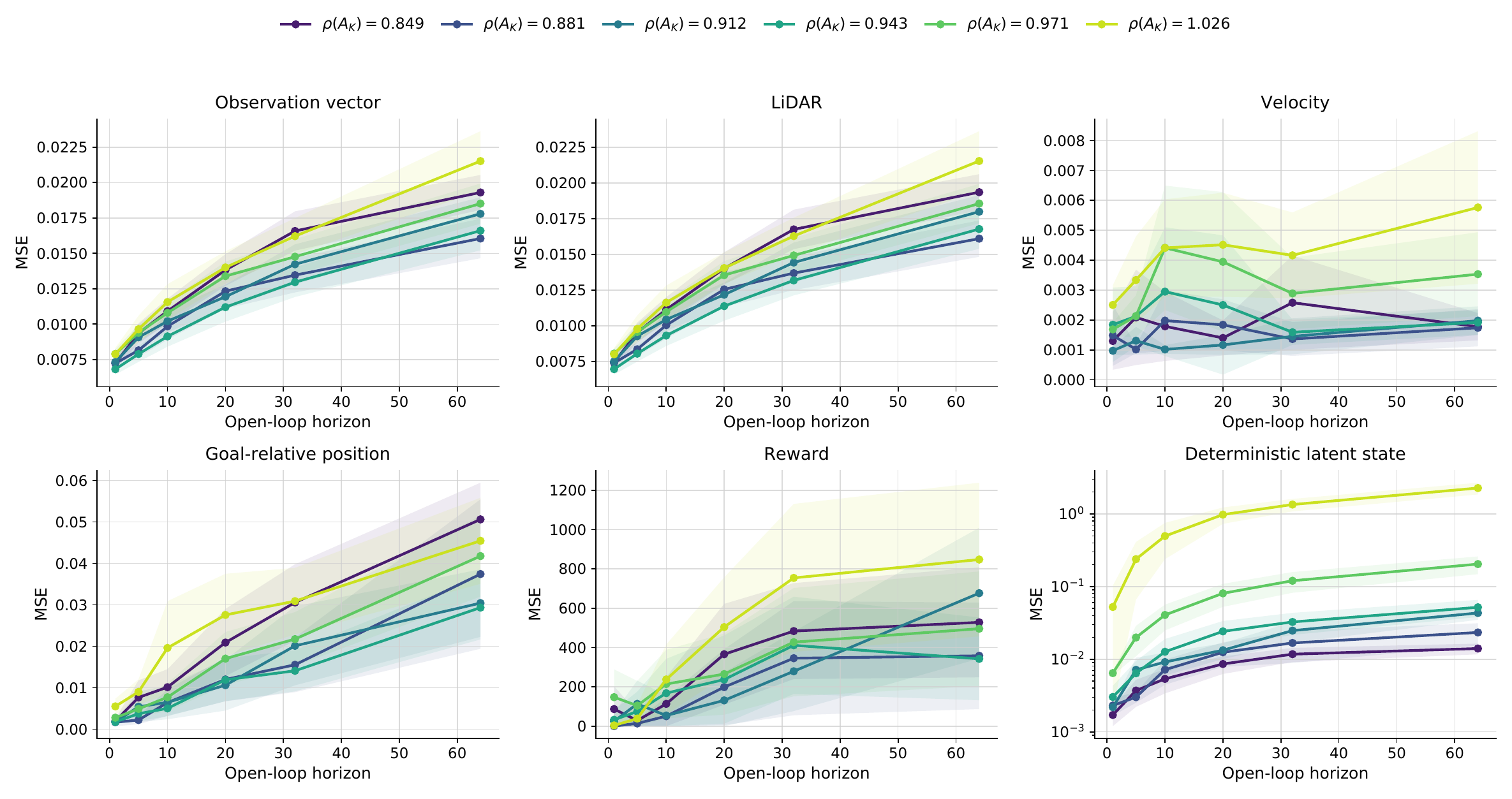}
\caption{Open-loop prediction error as a function of prediction horizon under different learned spectral radii in the UAV-LiDAR Forest scenario. The six panels report mean-squared errors for the complete observation vector, LiDAR observations, velocity, target-relative position, reward, and deterministic latent state. Solid lines indicate means over evaluation batches, and shaded regions indicate approximate 95\% confidence intervals of batch means.}
\label{fig:learned_radius}
\end{figure}
\subsection{Supplementary DMC Open-Loop Curves}
Fig.~\ref{fig:appendix_dmc_openloop_all_1} and Fig.~\ref{fig:appendix_dmc_openloop_all_2} provide open-loop prediction curves for all nine DMC proprioceptive tasks. Each evaluation uses a 32-step observation context and a horizon-64 open-loop rollout along recorded action sequences, with 32 replay batches and 16 sequences per batch. The three common metrics summarized in Table~\ref{tab:dmc_openloop_summary} are reported at horizons 1, 5, 10, 20, 32, and 64; where applicable, the task figures additionally show errors for task-specific state components. Overall, Koopman Dreamer gives the most consistent improvement in deterministic-latent prediction, with lower horizon-64 error on all nine tasks, and it lowers proprioceptive and reward prediction error on most tasks. These results support the role of the spectrally constrained backbone in improving long-horizon latent propagation and task-relevant prediction.
\begin{figure}[!t]
\centering
\subfloat[{\scriptsize Acrobot Swingup}]{\includegraphics[width=0.49\textwidth]{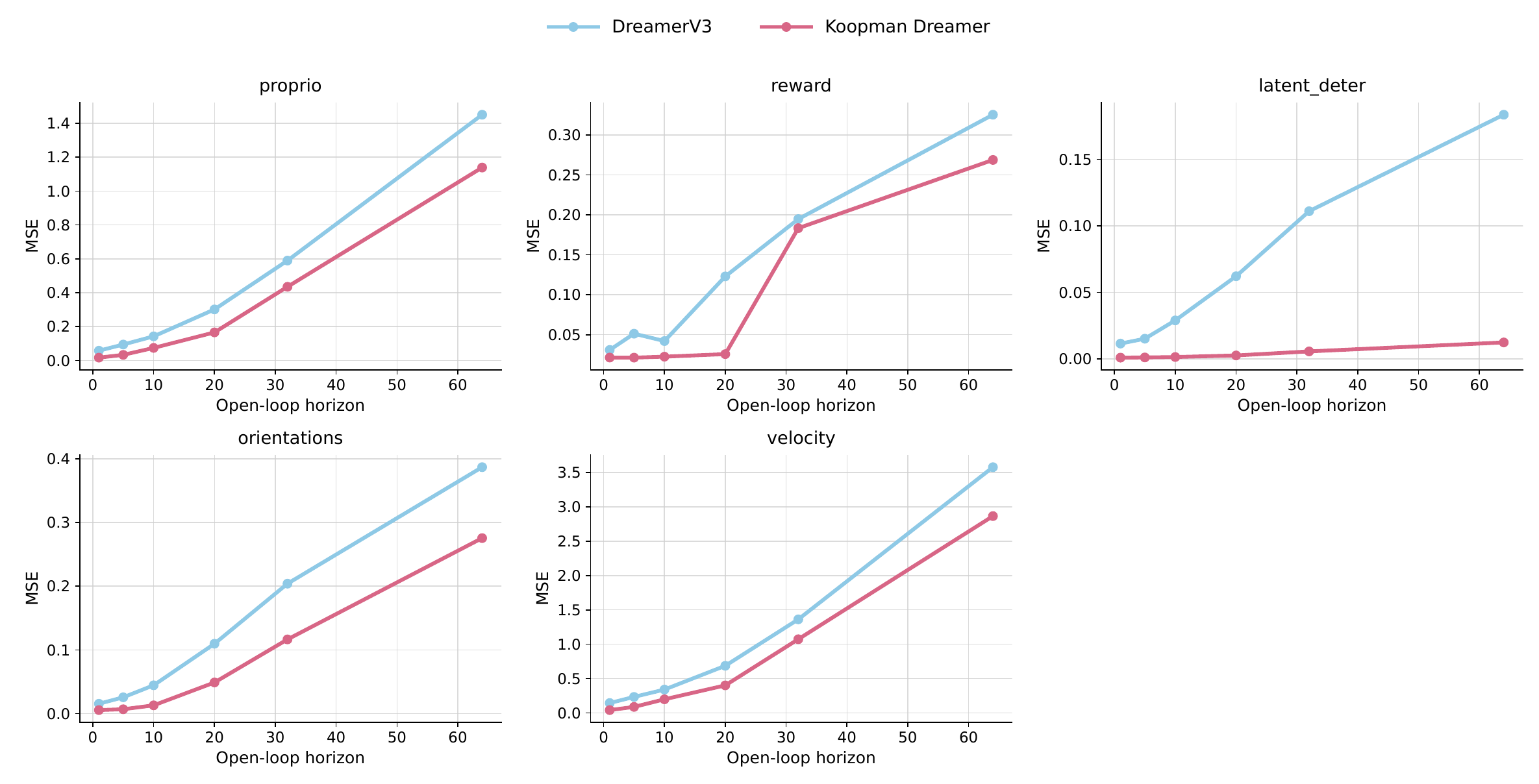}}
\hfil
\subfloat[{\scriptsize Cartpole Swingup}]{\includegraphics[width=0.49\textwidth]{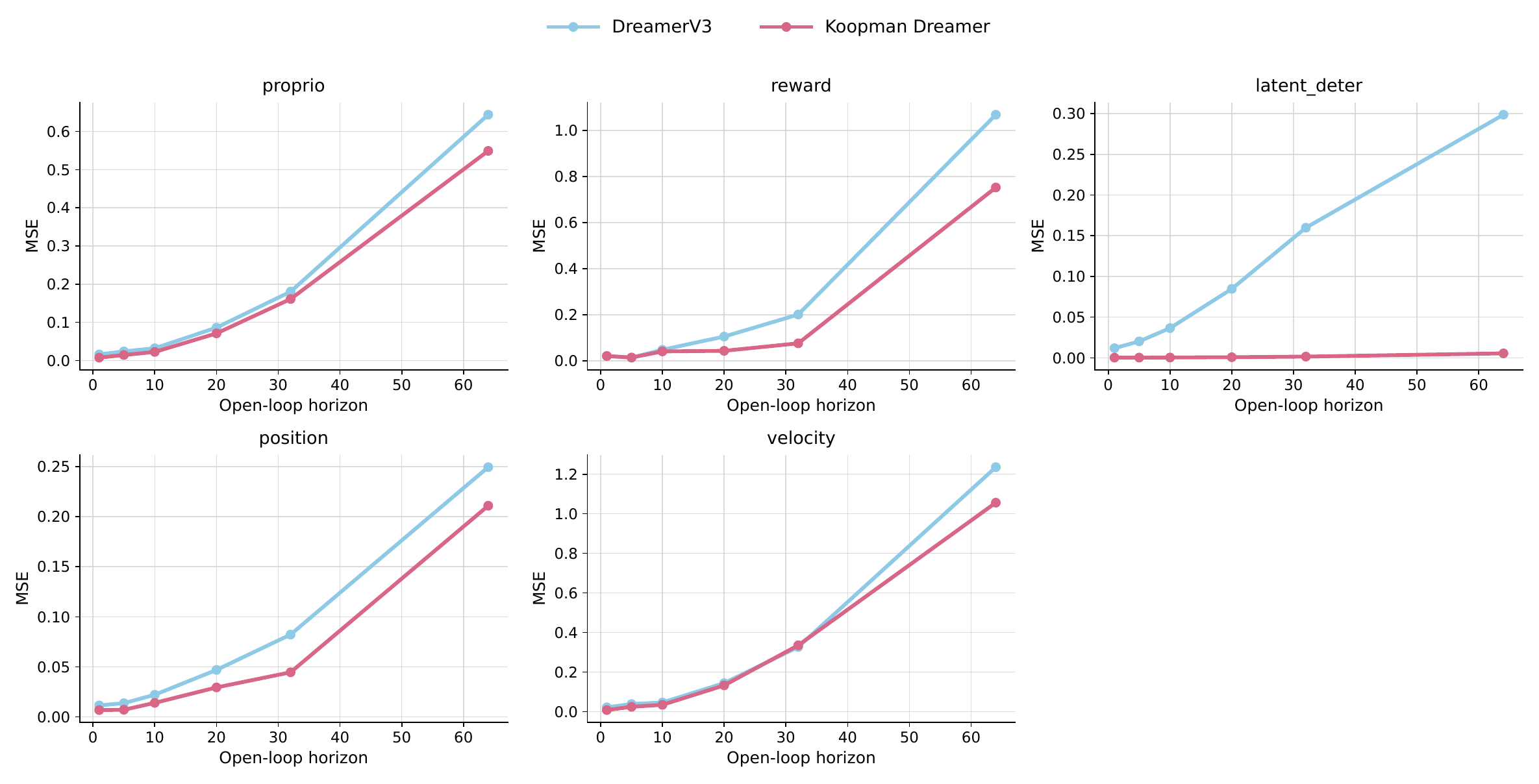}}
\par\vspace{0.35em}
\subfloat[{\scriptsize Cheetah Run}]{\includegraphics[width=0.49\textwidth]{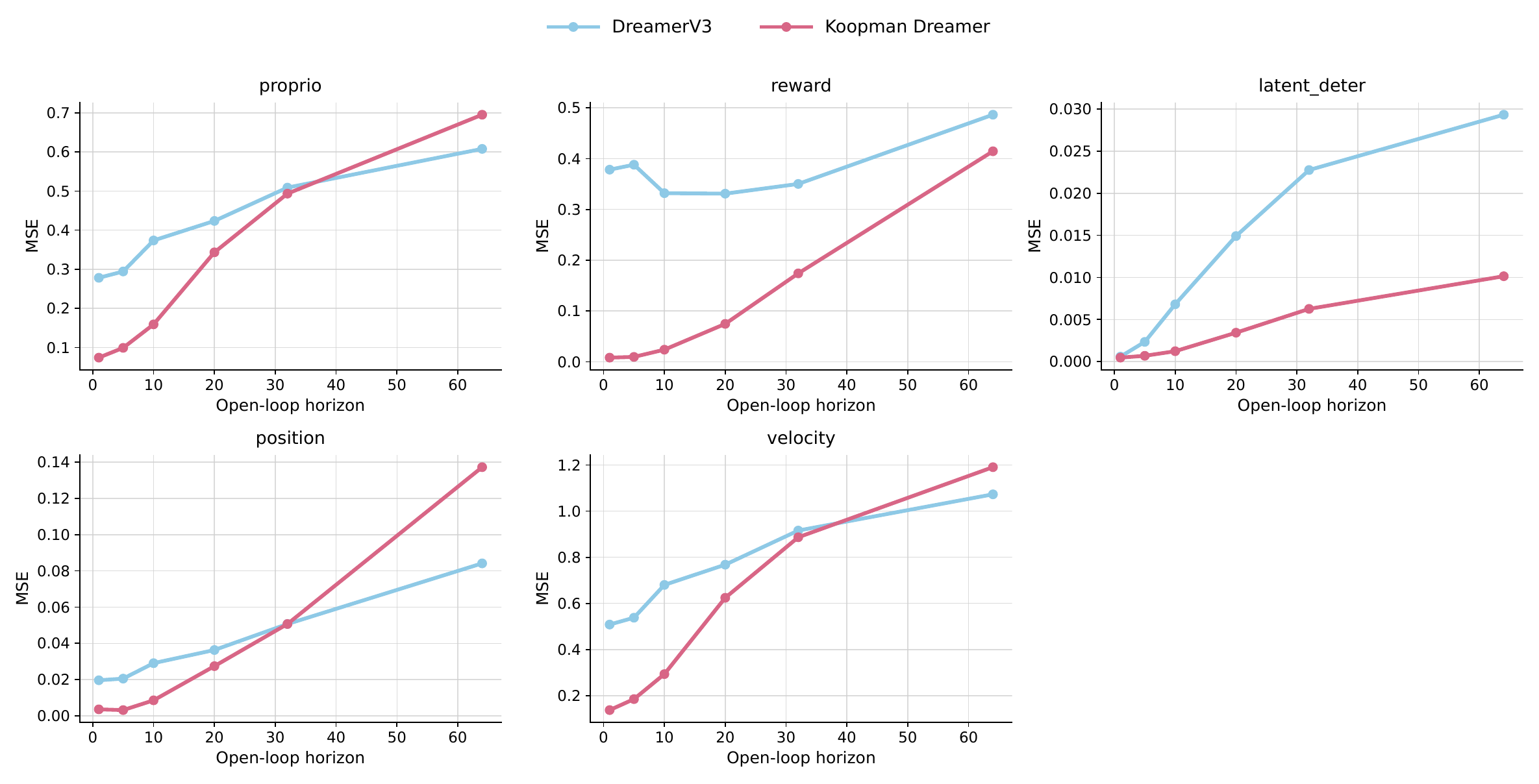}}
\hfil
\subfloat[{\scriptsize Hopper Stand}]{\includegraphics[width=0.49\textwidth]{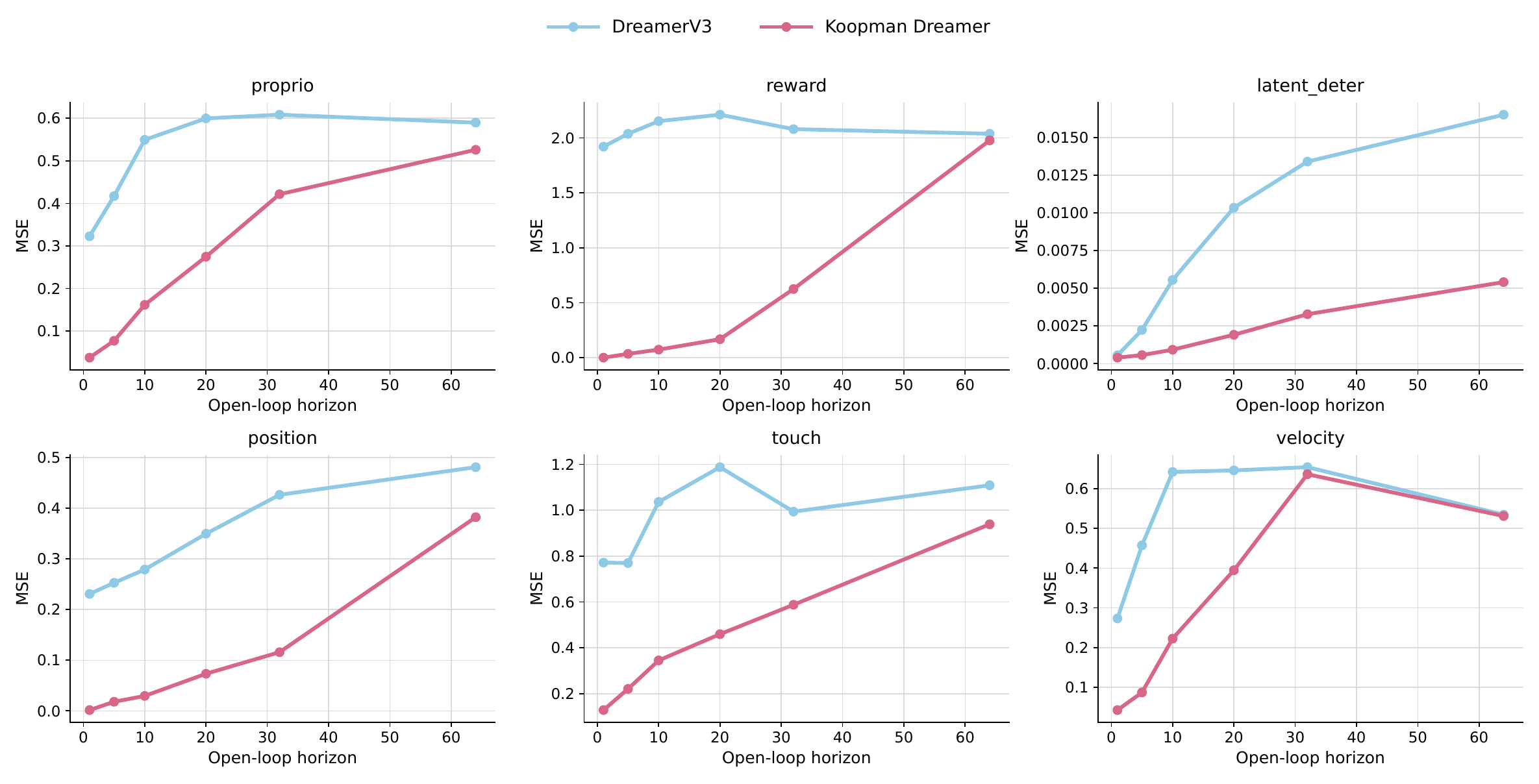}}
\par\vspace{0.35em}
\subfloat[{\scriptsize Reacher Easy}]{\includegraphics[width=0.49\textwidth]{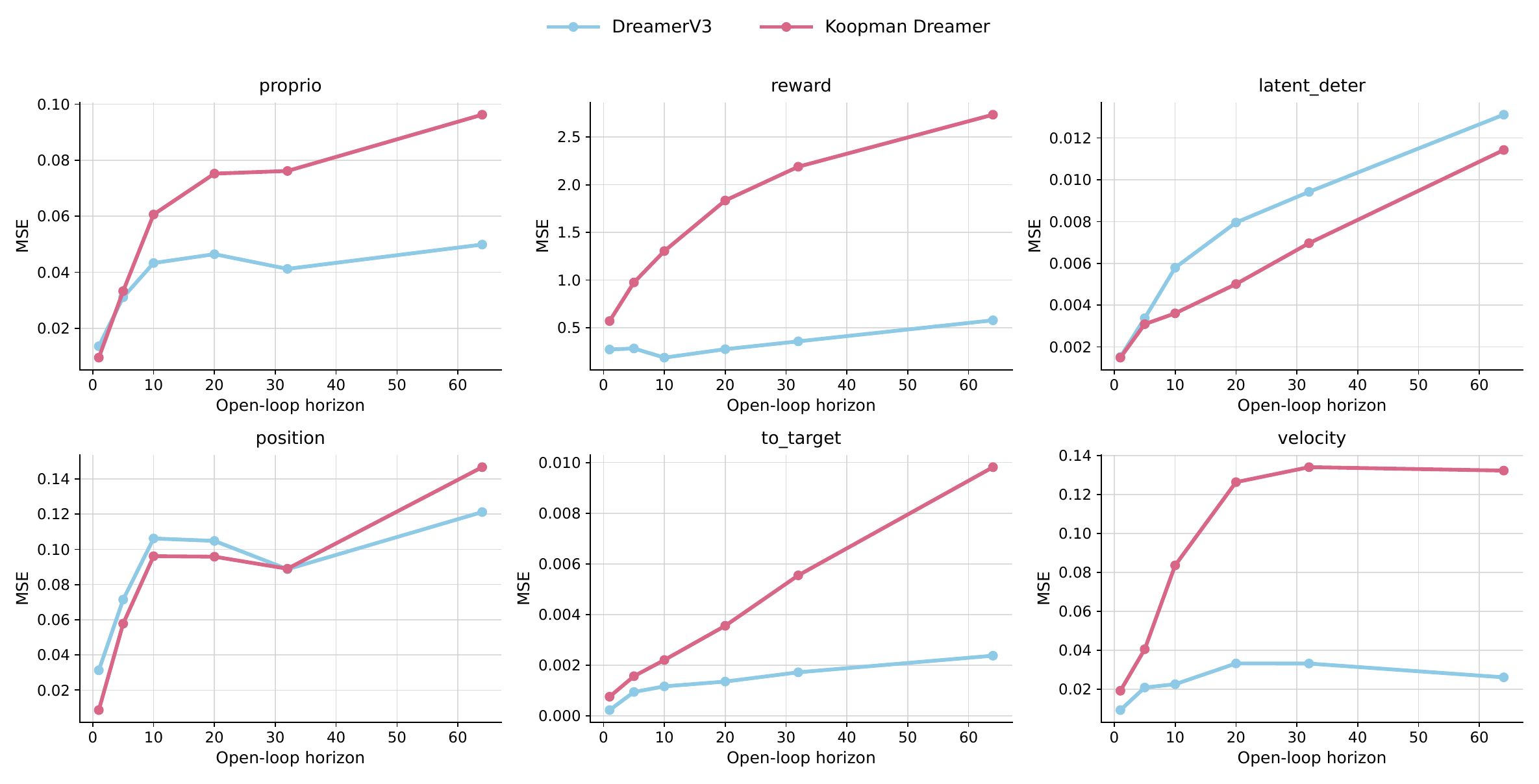}}
\hfil
\subfloat[{\scriptsize Reacher Hard}]{\includegraphics[width=0.49\textwidth]{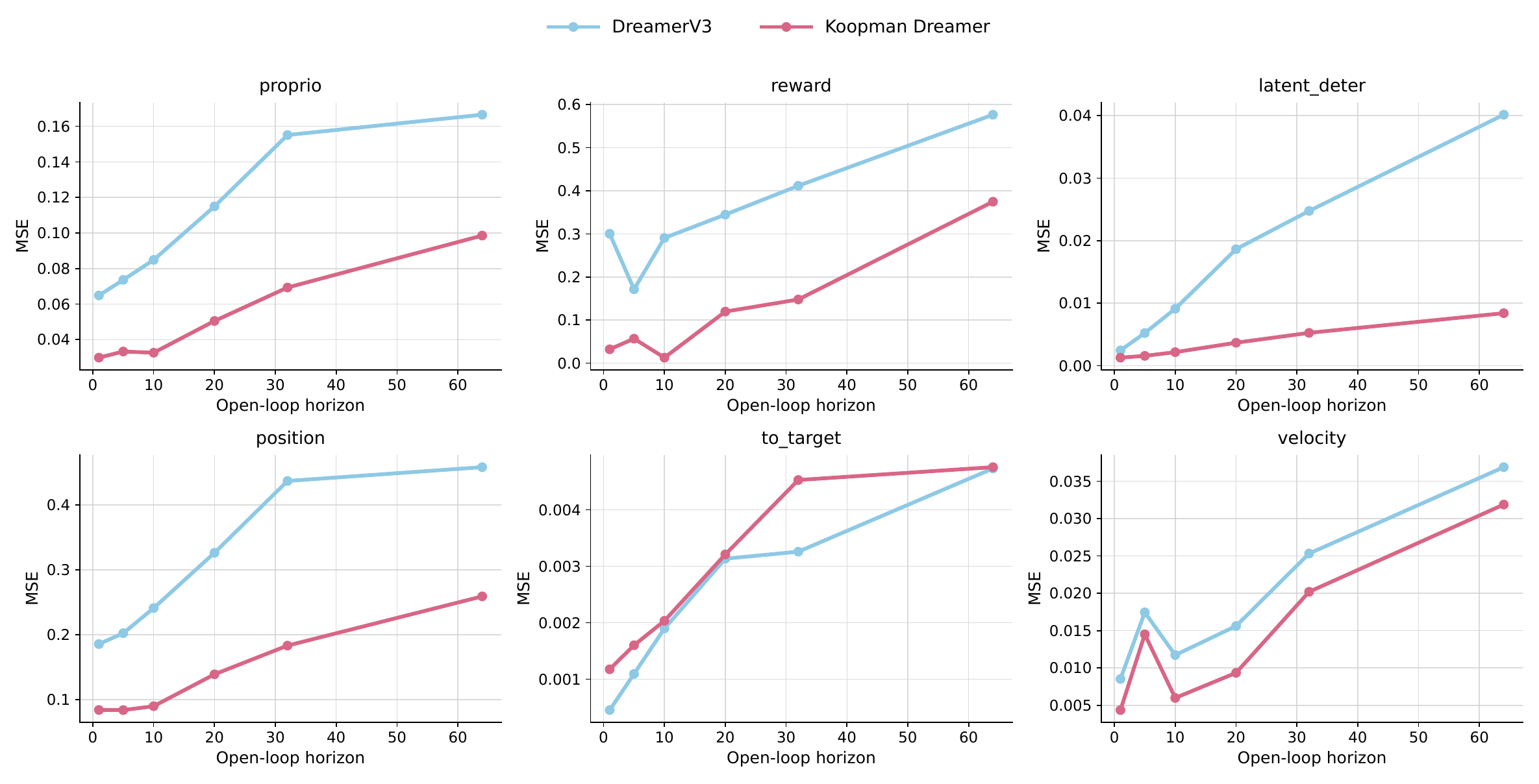}}
\caption{Complete open-loop prediction curves on DMC proprioceptive tasks, part 1.}
\label{fig:appendix_dmc_openloop_all_1}
\end{figure}
\begin{figure}[!t]
\centering
\subfloat[{\scriptsize Walker Run}]{\includegraphics[width=0.49\textwidth]{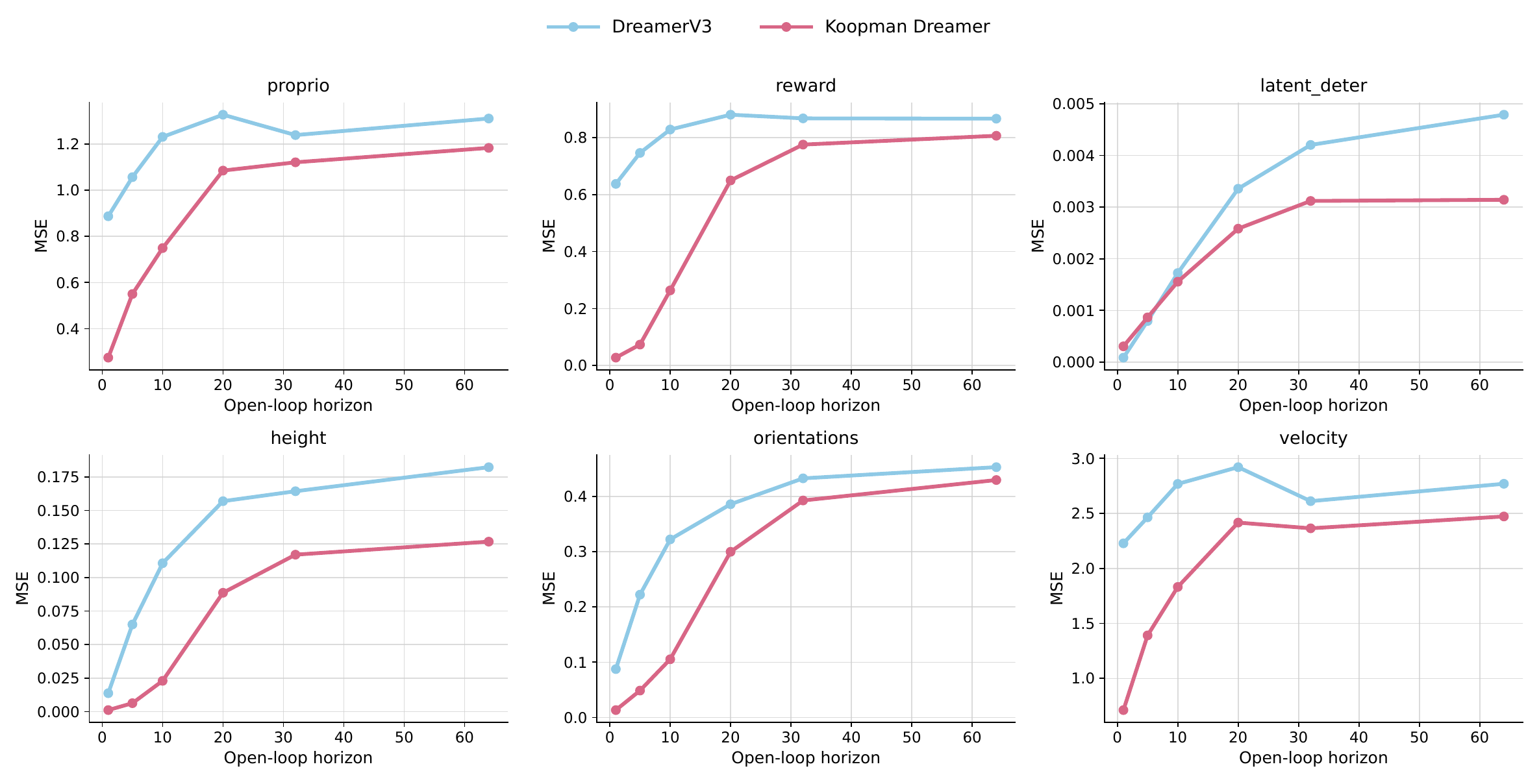}}
\hfil
\subfloat[{\scriptsize Walker Stand}]{\includegraphics[width=0.49\textwidth]{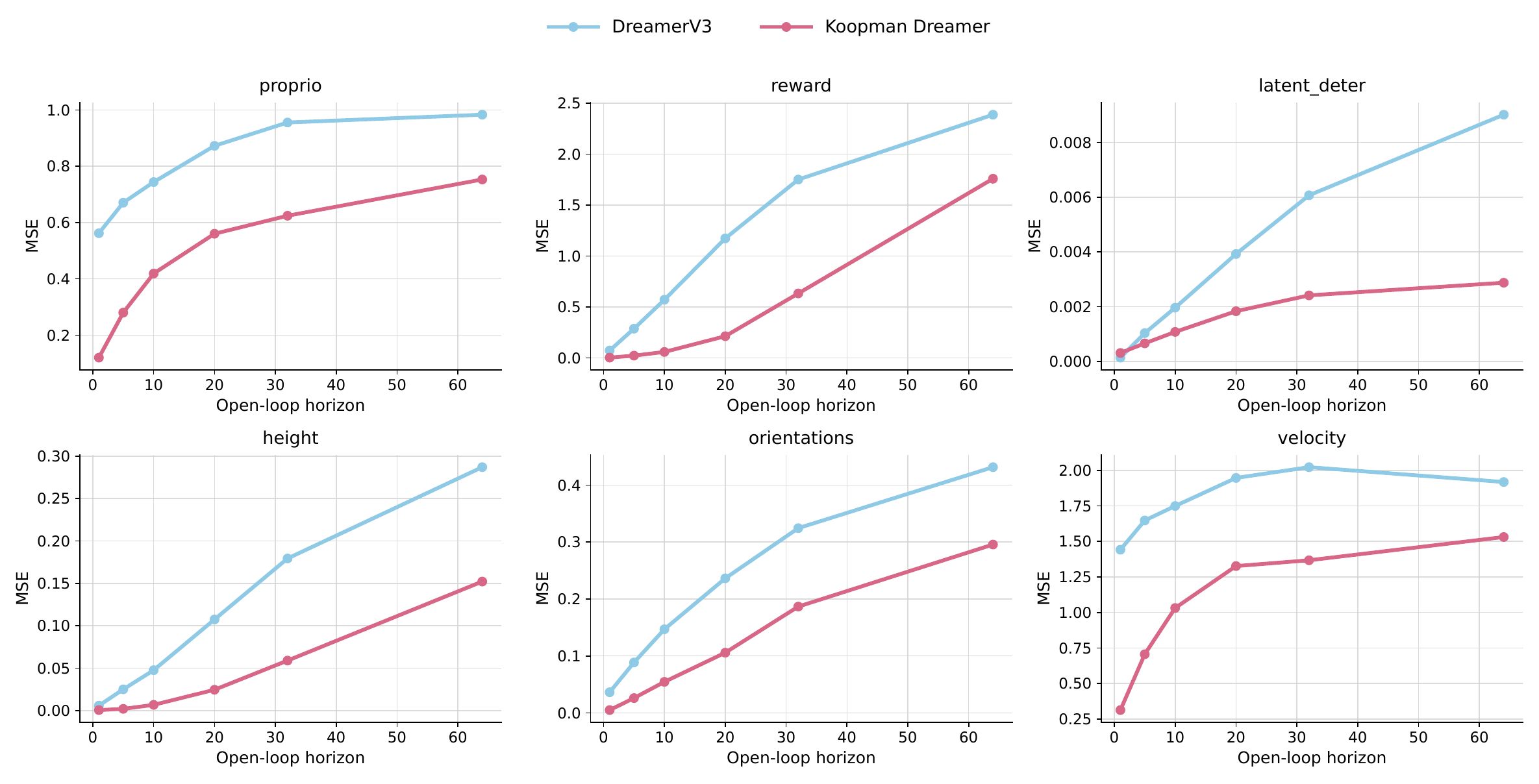}}
\par\vspace{0.35em}
\subfloat[{\scriptsize Walker Walk}]{\includegraphics[width=0.49\textwidth]{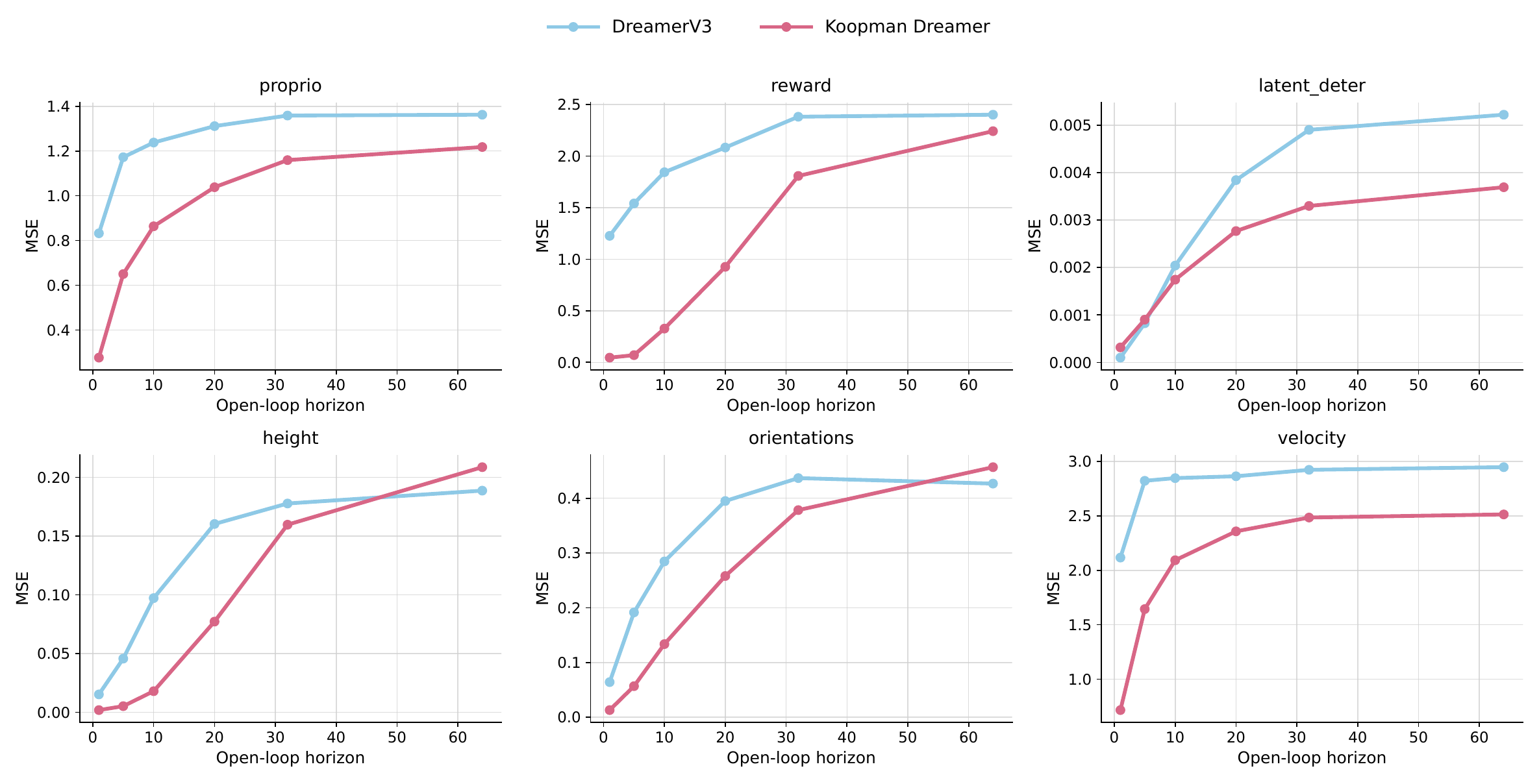}}
\caption{Complete open-loop prediction curves on DMC proprioceptive tasks, part 2.}
\label{fig:appendix_dmc_openloop_all_2}
\end{figure}

\end{document}